\newcommand{\relux}[0]{{ReLU$\theta$ }}
\newcommand{\yworst}[0]{z^*}
\newcommand{\logits}[0]{z}
\newcommand{\lb}[0]{\text{LB}}
\newcommand{\ub}[0]{\text{UB}}
\newcommand{\lbmax}[0]{\text{LB}_\text{max}}
\newcommand{\ubmin}[0]{\text{UB}_\text{min}}
\newcommand{\IM}[0]{I_{\text{V}}}
\newcommand{\IMF}[0]{I_{\text{C}}}
\newcommand{\IML}[0]{I_{\text{L}}}
\newcommand{\DM}[0]{D_{\text{V}}}
\newcommand{\DML}[0]{D_{\text{L}}}
\newcommand{\ours}[0]{Local-Lip}
\newcommand{\logitsboundexact}[0]{F(\mathbb{B}_2(x, \epsilon))}
\newcommand{\logitsbound}[0]{\hat{z}(\mathbb{B}(x))}
\DeclareMathOperator*{\argmax}{arg\,max}
\newcommand{\mnistmodel}{4C3F}
\newcommand{\tinyimagenetmodel}{7C1F}
\newcommand{\cifarmodel}{6C2F}
\newcommand{\na}{-}
\newcommand{\expnumber}[2]{{#1}\mathrm{e}{#2}}
\newtheorem{theorem}{Theorem}
\newtheorem{proposition}{Proposition}
\newtheorem{definition}{Definition}
\title{Training Certifiably Robust Neural Networks with Efficient Local Lipschitz Bounds}
\author{
	Yujia Huang$^1$~~~Huan Zhang$^2$~~~Yuanyuan Shi$^3$~~~J.\ Zico Kolter$^{2,4}$~~~Anima Anandkumar$^{1,5}$
	\and
	$^1$California Institute of Technology~~~$^2$Carnegie Mellon University~~~$^3$UC San Diego~~~ \\ $^4$Bosch Center for AI~~~$^5$NVIDIA \\
	\texttt{yjhuang@caltech.edu} \enskip
	\texttt{huan@huan-zhang.com} \enskip
	\texttt{yyshi@eng.ucsd.edu}  \\
	\texttt{zkolter@cs.cmu.edu} \enskip
	\texttt{anima@caltech.edu}
}
\begin{document}

\maketitle

\begin{abstract}
Certified robustness is a desirable property for deep neural networks in safety-critical applications, and popular training algorithms can certify robustness of a neural network by computing a global bound on its Lipschitz constant. However, such a bound is often loose: it tends to over-regularize the neural network and degrade its natural accuracy. A tighter Lipschitz bound may provide a better tradeoff between natural and certified accuracy, but is generally hard to compute exactly due to non-convexity of the network. In this work, we propose an efficient and trainable \emph{local} Lipschitz upper bound by considering the interactions between activation functions (e.g. ReLU) and weight matrices. 
Specifically, when computing the induced norm of a weight matrix, we eliminate the corresponding rows and columns where the activation function is guaranteed to be a constant in the neighborhood of each given data point, which provides a provably tighter bound than the global Lipschitz constant of the neural network.
Our method can be used as a plug-in module to tighten the Lipschitz bound in many certifiable training algorithms.
Furthermore, we propose to clip activation functions (e.g., ReLU and MaxMin) with a learnable upper threshold and a sparsity loss to assist the network to achieve an even tighter local Lipschitz bound.
Experimentally, we show that our method consistently outperforms state-of-the-art methods in both clean and certified accuracy on MNIST, CIFAR-10 and TinyImageNet datasets with various network architectures.
\end{abstract}

\section{Introduction}

With the ever-growing deployment of deep neural networks, formal robustness guarantees are needed in many safety-critical applications. 
Strategies to improve robustness such as adversarial training only provide empirical robustness, without formal guarantees, and many existing adversarial defenses have been successfully broken using stronger attacks \citep{athalye2018obfuscated}. In contrast, certified defenses give formal robustness guarantees that any norm-bounded adversary cannot alter the prediction of a given network. 

Bounding the global Lipschitz constant of a neural network is a computationally efficient and scalable approach to provide certifiable robustness guarantees~\cite{cisse2017parseval,qian2018l2,leino2021globally}.
The global Lipschitz bound is typically computed as the product of the spectral norm of each layer. 
However, this bound can be quite loose because it needs to hold for \emph{all} points from the input domain, including those inputs that are far away from each other.
Training a network while constraining this loose bound often imposes to high a degree of regularization and reduces network capacity. It leads to considerably lower clean accuracy in certified training compared to standard and adversarial training~\citep{huster2018limitations, madry2017towards}.

A local Lipschitz constant, on the other hand, bounds the norm of output perturbation only for inputs from a small region, usually selected as a neighborhood around each data point. It produces a tighter bound by considering the geometry in a local region and often yields much better robustness certification~\citep{hein2017formal,zhang2019recurjac}. Unfortunately, computing the exact local Lipschitz constant is NP-complete \cite{katz2017reluplex}. Obtaining reasonably tight local Lipschitz bounds via semidefinite programming \cite{fazlyab2019efficient} or mixed integer programming \cite{jordan2020exactly} is typically only applicable to small, \emph{previously trained networks} since it is difficult to parallelize the optimization solver and make it differentiable for training. On the other hand, many existing certified defense methods have achieved success by using a training-based approach with a relatively weak but efficient bound~\citep{gowal2018effectiveness,mirman2018differentiable,zhang2019towards}. Therefore, to incorporate local Lipschitz bound in training, a computationally efficient and training-friendly method must be developed.

\begin{figure}[t]
    \centering
    \includegraphics[width=\textwidth]{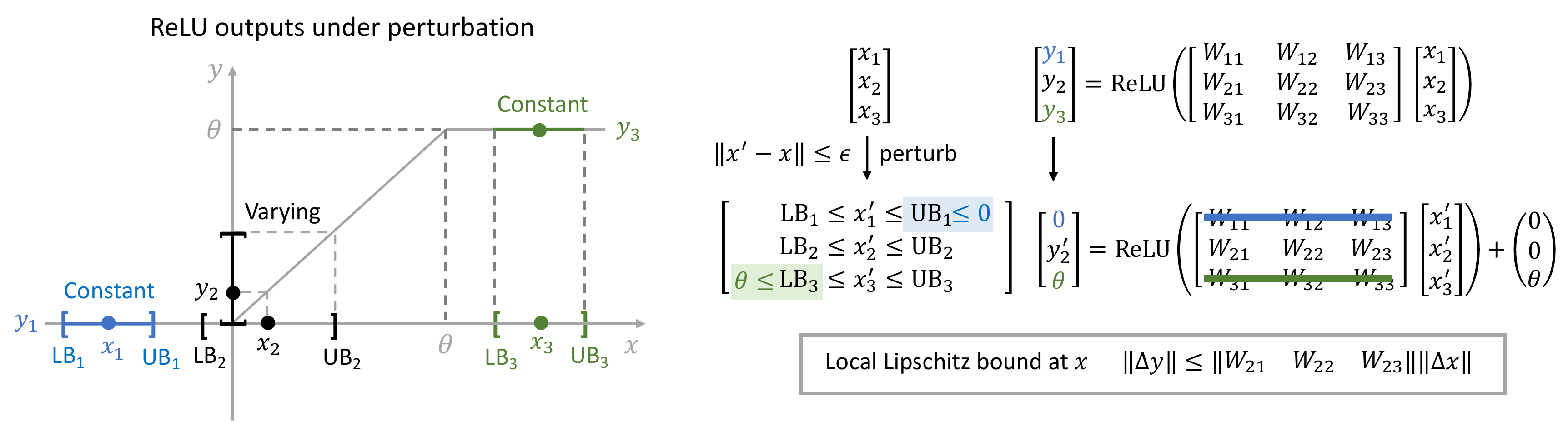}
    \caption{Illustration of tighter (local) Lipschitz constant with bounded ReLU.}
    \label{fig:relux}
\end{figure}

\textbf{Our contributions: } We propose an efficient method to incorporate a local Lipschitz bound in training deep networks, by considering the interactions between an activation layer such as a Rectified Linear Unit (ReLU) layer and a linear (or convolution) layer. 
Our bound is computed for each data point and this translates to different types of outputs from the activation function: constant or varying under input perturbations.
If the outputs of some activation neurons are constant under local perturbation, we eliminate the corresponding rows in the previous weight layer and the corresponding columns in the next weight layer, and then compute the spectral norm of the reduced matrix. 

Our main insight is to use training to make the proposed local Lipschitz bound tight. This is different from existing works that find local Lipschitz bound for a fixed network \cite{zhang2019recurjac,fazlyab2019efficient, jordan2020exactly}. 
Instead, we aim to enable a network to learn to tighten our proposed local Lipschitz bound during training. To achieve this, we propose to clip the activation function with an individually learnable threshold $\theta$. Take ReLU for example, the output of a ``clipped'' ReLU becomes a constant when input is greater than this threshold (see Figure \ref{fig:relux}). Once the input of the ReLU is greater than the threshold or less than 0, then this ReLU neuron does not contribute to the local Lipschitz constant, and thus the corresponding row or column of weight matrices can be removed. We also apply this method to non-ReLU activation functions such as MaxMin \cite{anil2019sorting} to create constant output regions.
Additionally, we also use a hinge loss function to encourage more neurons to have constant outputs. 
Our method can be used as a plug-in module in existing certifiable training algorithms that involve computing Lipschitz bound.
Our contributions can be summarized as:
\begin{itemize}[leftmargin=*]
\item To the best of our knowledge, we are the first to incorporate a local Lipschitz bound during training for certified robustness. Our bound is provably tighter than the global Lipschitz bound and is also computationally efficient for training.
\item We propose to use activation functions with learnable threshold to encourage more fixed neurons during training, which assists the network to learn to tighten our bound. We show that more than 45\% rows and columns can be removed from weight matrices. 
\item We consistently outperform state-of-the-art Lipschitz based certified defense methods for $\ell_2$ norm robustness. On CIFAR-10 with perturbation $\epsilon=\frac{36}{255}$, we obtain 54.3\% verified accuracy with ReLU activation function and 60.7\% accuracy with MaxMin \cite{anil2019sorting}, outperforming the SOTA baselines, and also achieve better clean accuracy. Our code is available at \url{https://github.com/yjhuangcd/local-lipschitz}.
\end{itemize}

\section{Related work}

\paragraph{Bounds on Local Lipschitz Constant}
A \emph{sound upper bound} of local Lipschitz constant (simply referred to as ``local Lispschitz bound'' in our paper) is a crucial property to determine the robustness of a classifier. Finding an exact local Lipschitz constant for a neural network is generally NP hard~\citep{scaman2018lipschitz}, so most works focus on finding a sound upper bound. \citet{hein2017formal} derived an analytical bound for 2 layer neural networks and found that local Lipschitz bounds could be much tighter than the global one and give better robustness certificates. 
RecurJac~\citep{zhang2019recurjac} is a recursive algorithm that analyzes the local Lipschitz constant in a neural network using a bound propagation~\citep{zhang2019towards} based approach. FastLip~\citep{weng2018towards} is a special and weaker form of RecurJac. \citet{fazlyab2019efficient} used a stronger semidefinite relaxation to compute a tighter bound of local Lipschitz constant. \citet{jordan2020exactly} formulated the computation of Lipschitz as an mixed integer linear programming (MILP) problem and they were able to solve the exact local Lipschitz constant. Although these approaches can obtain reasonably tight and sound local Lipschitz constants, none of them have been demonstrated effective for training a certifiably robust network, where high efficiency and scalability are required. 
Note that although an empirical estimate of local Lipschitz constant can be easily found via gradient ascent (e.g., the local lower bounds reported in~\citep{leino2021globally}), it is not a sound bound and does not provide certifiable robustness guarantees.

\paragraph{Certifiably Robust Training using Lipschitz constants} 
The Lipschitz constant plays a central role in many works on training a certifiably robust neural network, especially for $\ell_2$ norm robustness. Since naturally trained networks usually have very large global Lipschitz constant bounds~\citep{szegedy2013intriguing}, most existing works train the network to encourage small a Lipschitz bound. \citet{cisse2017parseval} designed networks with orthogonal weights, whose Lipschitz constants are exactly 1. As this can be too restrictive, later works mostly use power iteration to obtain per-layer induced norms, whose product is a Lipschitz constant. Lipschitz Margin Training (LMT) \cite{tsuzuku2018lipschitz} and  Globally-Robust Neural Networks (Gloro) \cite{leino2021globally} both upper bound the worst margin via global Lipschitz constant with different loss functions. LMT constructs a new logit by adding the worst margin to all its entries except the ground truth class. 
Gloro construct a new logit with one more class than the original logit vector, determines whether the input sample can be certified. 
However, these approaches did not exploit the available local information to tighten Lipschitz bound and improve certified robustness. 
Box constrained propagation (BCP) \cite{lee2020lipschitz} achieves a tighter outer bound than global Lipschitz based outer bound, by taking local information into consideration via interval bound (box) propagation. They compute the worst case logit based on the intersection of a (global) ball and a (local) box.  
Although box propagation considers local information, the ball propagation still uses global Lipschitz constant, and its improvement is still limited with low clean accuracy.

\paragraph{Other Certified Defenses}
Besides using Lipschitz constants, one of the most popular certifiable defense against $\ell_\infty$ norm bounded inputs is via the convex outer adversarial polytope \cite{kolter2017provable,wong2018scaling}. \citep{mirman2018differentiable} takes a similar approach via abstract interpretation.
These methods uses linear relaxations of neural networks to compute an outer bound at the final layer. However, because the convex relaxations employed are relatively expensive, these methods are typically slow to train. 
A simple and fast certifiable defense for $\ell_\infty$ norm bounded inputs is interval bound propagation (IBP)~\cite{gowal2018effectiveness,mirman2018differentiable}. Since the IBP bound can be quite loose for general networks, its good performance relies on appropriate hyper-parameters. CROWN-IBP \cite{zhang2019towards} outperforms previous methods by combining IBP bound in a forward bounding pass and a tighter linear relaxation bound in a backward bound pass. \cite{shi2021fast} improved IBP with better initialization to accelerate training.
Additionally, randomized smoothing~\citep{cohen2019certified,li2018second,lecuyer2019certified} is a probabilistic method to certify $\ell_2$ norm robustness with arbitrarily high confidence. The prediction of a randomized smooth classifier is the most likely prediction returned by the base classifier that is fed by samples from a Gaussian distribution. \citet{salman2019provably} further improves the performance of randomized smoothing via adversarial training.
\section{Method}
\label{sec:method}
We begin with notations and background for Lipschitz bound. 
We introduce our method for local Lipschitz bound computation in Section \ref{sec:relu0}. Then we introduce how to incorporate our efficient local Lipschitz bound in robust training (Section \ref{sec:robusttrain}).

\subsection{Notation and Background}
\textbf{Notations.} We denote the Euclidean norm of a vector $x$ as $\|x\|$ and $\| A \|$ is the spectral norm of matrix $A$. Subscript of vector $x$ denotes element, i.e., $x_i$ is the i-th element of $x$. We use $\lb^l$ and $\ub^l$ to denote lower bounds and upper bounds of pre-ReLU activation values for layer $l$.
\begin{definition}
The Lipschitz constant of a function $f: \mathbb{R}^{d}  \rightarrow \mathbb{R}^m$ over an open set $\mathcal{X}$ is defined as, 
$$L(f, \mathcal{X}):= \sup_{x, y \in \mathcal{X}, x \neq y} \frac{||f(y)-f(x)||}{||x-y||}\,,$$
If $L(f, \mathcal{X})$ exists and is finite, we say that $f$ is Lipschitz continuous over $\mathcal{X}$. Suppose $\mathcal{X} = \text{dom} (f)$, $L(f, \mathcal{X})$ is the \textbf{global} Lipschitz constant of $f$; if $\mathcal{X}$ is defined as the $\epsilon$-ball at point $x$, i.e., $\mathcal{X}:= \{x'|||x-x'|| \leq \epsilon\}$, then $L(f, \mathcal{X})$ is the \textbf{local} Lipschitz constant of $f$ at $x$.
\end{definition}
\paragraph{Global Lipschitz bound in existing works}
Consider a $L$-layer ReLU neural network which maps input $x$ to output $z^{L+1} = F(x; W)$ using the following architecture, for $l = 1, . . . , L-1$
\begin{equation}\label{eq:relu_nn}
z^1 = x; \quad z^{l+1} = \phi(W^l z^l); \quad z^{L+1} = W^L z^L
\end{equation}
where $W = \{W^{1:L}\}$ are the parameters, and $\phi(\cdot) = \max(\cdot, 0)$ is the element-wise ReLU activation functions. Here we consider the bias parameters to be zero because they do not contribute to the Lipschitz bound.
Since the Lipschitz constant of ReLU activation $\phi(\cdot)$ is equal to 1, a global Lipschitz bound of $F$ is,

\begin{equation}\label{eq:glocal_lips}
L_{\text{glob}} \leq ||W^{L}|| \cdot ||W^{L-1}|| \cdots ||W^{1}|| 
\end{equation}
where $||W^l||$ equals the spectral norm (maximum singular value) of the weight matrix $W^l$.
However, the global Lipchitz bound ignores the highly nonlinear property of deep neural networks. 

In what follows, we introduce our method that considers the interaction between ReLU and linear layer to obtain a tighter local Lipchitz bound in a computationally efficient way, that allows us to train a certifiably robust network using local Lipschitz bounds.

\subsection{Our Approach for Efficient Local Lipschitz Bound}
\label{sec:relu0}
In this section, we use ReLU as an example to describe how we compute our efficient local Lipschitz bound. We will discuss how to apply our method on other types of activation functions in Section \ref{sec:robusttrain}.
To exploit the piece-wise linear properties of ReLU neurons, we discuss the outputs of ReLU case by case. 
Intuitively, if the input of a ReLU neuron is always less or equal to zero, its output will always be zero, which is a constant and not contributing to Lipschitz bound. If the input of a ReLU's can sometimes be greater than zero, the ReLU output will vary based on the input.

We define diagonal indicator matrices $\IM^l(z^l)$ to represent the entries where the ReLU outputs are \emph{varying} and $\IMF^l(z^l)$ for entries where the ReLU outputs are \emph{constant} under perturbation. Here $z^l \in \mathbb{R}^{d_l}$ denotes the feature map of input $x$ at layer $l$. Throughout this paper, unless otherwise mentioned, the indicator matrix is a function of the feature value $z^l$, evaluated at a given input $x$.

Given an input perturbation $ \|x'-x\| \leq \epsilon$, suppose $z^l(x')$ is bounded element-wise as $\lb^l \leq z^l(x') \leq \ub^l$, we define diagonal matrix $\IM^l$ and $\IMF^l$ as:
\begin{equation}\label{eq:indicator_relu0}
\IM^l(i, i) = \begin{cases} 1 & \text{if } \ub^{l}_i > 0\\
0 & \text{otherwise}
\end{cases}\,,\quad \IMF^l(i, i) = \begin{cases} 1 & \text{if } \ub^{l}_i \leq 0\\
0 & \text{otherwise}
\end{cases} 
\end{equation}
By this definition, the ReLU output can either be constant or vary with respect to input perturbation. 
Hence we have $\IM^l + \IMF^l = I$, where $I$ is the identity matrix. $\lb$ and $\ub$ can be obtained cheaply from interval bound propagation \cite{gowal2018effectiveness} or other bound propagation mechanisms~\citep{kolter2017provable,zhang2018efficient}.

A crucial observation is that to compute the local Lipschitz bound, we only need to consider the ReLU neurons which are non-fixed. The fixed ReLU neurons are always zero (locally, in the prescribed neighborhood around $x$) and thus have no impact to final outcome. 
We define a diagonal matrix $\DM$ to represent the ReLU outputs that are varying. Then, a neural network function (denoted as $F(x; W)$) can be rewritten as:
\begin{align}\label{eq:relu0}
F(x; W) & = W^L \DM^{L-1}  W^{L-1} \cdots \DM^1 W^1 x \nonumber \\
&= (W^{L} \IM^{L-1}) \DM^{L-1} (\IM^{L-1} W^{L-1} \IM^{L-2}) \cdots \DM^1 (\IM^1 W^1) x \,
\end{align}
where 
\begin{equation} \label{eqn:D_vary}
\DM^l (i, i) = \begin{cases} \mathbbm{1}{(\text{ReLU}(z^l_i) > 0)} & \text{if } \IM^l (i, i)=1\\
0 & \text{if } \IM^l (i, i)=0
\end{cases} \,,\
\end{equation}
where $\mathbbm{1}$ denotes an indicator function.

Note here that we ignore bias terms for simplicity. Based on~\eqref{eq:relu0}, an important insight used in our approach is that by combining the ReLU function with weight matrix, we have the opportunity to tighten Lipschitz bound by considering $\IM^{l} W^{l} \IM^{l-1}$ as a whole based on whether there are ReLU outputs stay at constant under perturbation. Importantly, since $\DM^{l}$  depends on $\ub$, \eqref{eq:relu0} only holds in a local region of $x$, which leads to a local Lipschitz constant bound at input $x$:
\begin{equation}\label{eq:local_lip}
    L_{\text{local}}(x) \leq \|W^{L} \IM^{L-1}\| \|\IM^{L-1} W^{L-1} \IM^{L-2}\| \cdots \|\IM^1 W^1\|
\end{equation}

The following theorem states that the local Lipchitz bound calculated via~\eqref{eq:local_lip} is always tighter than the global Lipchitz bound in Eq \eqref{eq:glocal_lips}, for all inputs.

\begin{theorem}[Tighter Lipchitz Bound] \label{theo:local_lip}
For any input $x \in \mathbb{R}^n$ and $L$-layer ReLU neural network $F(x; W)$, the local Lipchitz bound calculated via~\eqref{eq:local_lip} in any neighborhood of $x$ is no larger than the global Lipchitz bound in Eq~\eqref{eq:glocal_lips}, i.e., $\forall x, L_{local}(x) \leq L_{glob}$.
\end{theorem}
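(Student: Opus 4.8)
The plan is to prove the product inequality factor by factor. Both bounds are products of spectral norms indexed by layer, and both have the same number of factors, $L$. Since every spectral norm is nonnegative, it suffices to show that each factor of the local product in~\eqref{eq:local_lip} is no larger than the corresponding factor $\|W^l\|$ of the global product in~\eqref{eq:glocal_lips}; multiplying these termwise inequalities then yields $L_{\text{local}}(x) \le L_{\text{glob}}$.

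The key structural observation is that each diagonal indicator matrix $\IM^l$ defined in~\eqref{eq:indicator_relu0} has entries in $\{0,1\}$, hence is an orthogonal projection and satisfies $\|\IM^l\| \le 1$ (it equals $1$ unless $\IM^l$ is the zero matrix, in which case the factor is simply $0$). The identity matrix $I$ appearing implicitly at the two boundary factors is likewise a projection of norm $1$. I would record the following elementary lemma: for any matrix $A$ and any diagonal $0$/$1$ matrices $P,Q$, submultiplicativity of the spectral norm gives
$$\|P A Q\| \le \|P\|\,\|A\|\,\|Q\| \le \|A\|,$$
since $\|P\|,\|Q\| \le 1$.

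Applying this lemma to each factor: the last factor obeys $\|W^L \IM^{L-1}\| \le \|W^L\|$ (take $P=I$, $Q=\IM^{L-1}$); each interior factor obeys $\|\IM^l W^l \IM^{l-1}\| \le \|W^l\|$ (take $P=\IM^l$, $Q=\IM^{l-1}$); and the first factor obeys $\|\IM^1 W^1\| \le \|W^1\|$ (take $P=\IM^1$, $Q=I$). Because every term on both sides is nonnegative, the termwise inequalities multiply to give the claimed bound, for every input $x$ and every choice of neighborhood --- the neighborhood only affects which entries of $\IM^l$ are active, which is immaterial to the estimate.

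There is essentially no deep obstacle here; the only point requiring a moment of care is the uniform treatment of the two boundary factors, where one of the flanking projections is the identity rather than an $\IM^l$. Folding the identity into the lemma as a trivial projection removes this asymmetry and lets the same one-line estimate cover all $L$ factors. The degenerate case where some $\IM^l$ is the zero matrix (all ReLUs in that layer fixed under perturbation) is also harmless: it only makes the local bound smaller, which is consistent with the inequality.
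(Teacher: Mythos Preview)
Your proof is correct and follows the same overall strategy as the paper: establish the per-factor inequality $\|\IM^{l} W^{l} \IM^{l-1}\| \le \|W^{l}\|$ and then multiply the (nonnegative) factors. The only difference is in how that per-factor inequality is justified. The paper states and proves a separate proposition that deleting rows or columns from a matrix cannot increase its spectral norm, arguing via $A'^{\top}A' = A^{\top}A + z z^{\top} \succeq A^{\top}A$ and eigenvalue monotonicity. You instead observe that each $\IM^{l}$ is a diagonal $0/1$ projection with operator norm at most $1$ and invoke submultiplicativity directly, $\|PAQ\| \le \|P\|\,\|A\|\,\|Q\| \le \|A\|$. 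These two justifications are equivalent for the case at hand; yours is slightly more streamlined since it avoids the auxiliary proposition, while the paper's formulation makes the ``row/column deletion'' interpretation more explicit, which ties in with the narrative about removing inactive neurons.
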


The proof of Theorem~\ref{theo:local_lip} leverages the following proposition.
\begin{proposition}\label{prop:local_lip}
If a column and/or row is added to a matrix, then the matrix spectral norm (maximum singular value) will be no less than the spectral norm of the original matrix. That is, given matrix $A \in \mathbb{R}^{m \times n}$, and $y \in R^{m}, z\in R^{n}$, then
\begin{equation}
    \sigma_{max}([A|y]) \geq \sigma_{max}(A)\,, \sigma_{max}(\begin{bmatrix}A\\ z\end{bmatrix}) \geq \sigma_{max}(A).
\end{equation}
\end{proposition}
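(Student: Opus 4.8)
The plan is to use the variational (operator-norm) characterization of the largest singular value, $\sigma_{max}(A) = \|A\| = \sup_{\|v\|=1}\|Av\|$, and to exhibit, for each augmented matrix, a test vector that reproduces the optimal value already attained by $A$. The monotonicity then follows because passing to a larger feasible set of test vectors can only increase a supremum.

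First I would handle the appended-column case $[A|y]$ with $A \in \mathbb{R}^{m\times n}$ and $y \in \mathbb{R}^m$. Let $v^\star \in \mathbb{R}^n$ be a unit vector attaining $\|A v^\star\| = \sigma_{max}(A)$, which exists by compactness of the unit sphere. Pad it with a single zero coordinate to form $w = (v^\star, 0) \in \mathbb{R}^{n+1}$, which is still a unit vector. Then $[A|y]\,w = A v^\star + 0\cdot y = A v^\star$, so $\|[A|y]\,w\| = \sigma_{max}(A)$. Since $\sigma_{max}([A|y])$ is the supremum of $\|[A|y]\,w\|$ over all unit $w \in \mathbb{R}^{n+1}$, it is at least the value attained at this particular $w$, giving $\sigma_{max}([A|y]) \geq \sigma_{max}(A)$.

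Next I would obtain the appended-row case essentially for free by transposition: appending the row $z$ to $A$ is the same as appending the column $z^\top$ to $A^\top$, and since $\sigma_{max}(M) = \sigma_{max}(M^\top)$ for every matrix $M$, the column result yields $\sigma_{max}(\begin{bmatrix}A\\ z\end{bmatrix}) = \sigma_{max}([A^\top|z^\top]) \geq \sigma_{max}(A^\top) = \sigma_{max}(A)$. A direct alternative also works: for any unit $v \in \mathbb{R}^n$, $\|\begin{bmatrix}A\\ z\end{bmatrix} v\|^2 = \|Av\|^2 + (z v)^2 \geq \|Av\|^2$, and maximizing over $v$ gives the claim. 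Finally, the combined case of adding both a row and a column follows by applying the two monotonicity statements in sequence.

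I do not expect a genuine obstacle here; the argument is short once the right characterization is fixed. The only things to get right are the dimension bookkeeping — that the padded test vector lives in the correct space $\mathbb{R}^{n+1}$ and remains a unit vector — and the observation that the supremum over the enlarged set of directions dominates the original one. The identity $\sigma_{max}(M)=\sigma_{max}(M^\top)$ is the cleanest device for avoiding a duplicated argument between the row and column cases.
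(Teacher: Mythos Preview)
Your proof is correct. The paper takes a slightly different route: it works with the Gram matrix, observing that for $A' = \begin{bmatrix}A\\ z\end{bmatrix}$ one has $A'^\top A' = A^\top A + z^\top z \succeq A^\top A$ in the positive semidefinite order, and then appeals to monotonicity of the largest eigenvalue under the Loewner ordering. Your primary argument instead uses the operator-norm characterization $\sigma_{max}(A)=\sup_{\|v\|=1}\|Av\|$ and zero-pads the optimal test vector, which is arguably more elementary since it avoids invoking eigenvalue monotonicity for PSD perturbations. Your ``direct alternative'' for the row case, $\|A'v\|^2 = \|Av\|^2 + (zv)^2 \geq \|Av\|^2$, is in fact the per-vector version of the paper's Gram-matrix identity, so the two approaches meet there. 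Either way the result is immediate; your handling of the column case via padding and the reduction of the row case by transposition are both clean and complete.
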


\begin{proof}[Proof of Proposition~\ref{prop:local_lip}]
Let $A' = \begin{bmatrix}A\\ z\end{bmatrix}$. The singular value of $A'$ is defined as the square roots of the eigenvalues of $A'^T A'$, where $A'^T A' = A^T A + z^T z \geq A^T A$, that simply imply $||A'|| = \sigma_{max}(A') \geq \sigma_{max}(A) = ||A||$.  Similar holds for $A' = [A | y]$.
\end{proof}

By Proposition~\ref{prop:local_lip}, it is straightforward to show that $\|\IM^{L-1} W^{L-1} \IM^{L-2}\| \leq \|W^{L-1} \|$ since the left hand side is the spectral norm of the reduced matrix, after removing corresponding rows/columns in $W^{L-1}$ where the neuron output under local perturbation is constant. Therefore, the product of spectral norm of the reduced matrices is no larger than the product of the spectral norm of raw weight matrices, which leads to $\forall x, L_{local}(x) \leq L_{glob}$. 

\subsection{Training for Tight Local Lipschitz} \label{sec:robusttrain}
To encourage the network to learn which rows and columns need to be eliminated to make local Lipschitz bound tighter, we combine our local Lipschitz bound computation with certifiably robust training.
This is different from existing works leveraging optimization tools to find local Lipschitz bound for a fixed network \cite{fazlyab2019efficient, jordan2020exactly}. By training with the proposed local Lipschitz bound, we can achieve good certified robustness on large neural networks. 

More precisely, using our local Lipschitz bound, we can obtain the worst case logit $\yworst$ that is used to form a robust loss for training: $\mathbb{E}_{(x,y) \sim \mathcal{D}} \mathcal{L}(\yworst(x), y)$, where $\mathcal{L}$ is the cross entropy loss function, $(x,y)$ is the image and label pair from the training datasets.
A simple way to compute the worst logit is $\yworst_i = \logits_i + \sqrt{2} \epsilon L_{\text{Local}} $ for $i \neq y$, $\yworst_y = \logits_y$ (see \cite{tsuzuku2018lipschitz}). 
Our approach is also compatible with tighter bounds on the worst case logit, such as the one used in BCP \cite{lee2020lipschitz}.  
To give the network more capability to learn to tighten our proposed local Lipschitz bound, we propose the following approaches:
\paragraph{Allowing More Eliminated Rows via \relux} 
The key to tighten our local Lipschitz bound is to delete rows and columns in weight matrices that align with singular vectors corresponding to the largest singular value. 
To encourage more rows and columns to be deleted, we need to have more ReLU outputs to be at constant under perturbation. 
Standard ReLU is only lower bounded by zero, but is not upper bounded. If we can set an ``upper bound'' of ReLU output, we can have more neurons have fixed outputs at this upper bound. 
An upper bounded ReLU unit called ReLU6 is proposed in \cite{howard2017mobilenets}, where the maximum output is set to a constant 6.
Different from \text{ReLU6} that sets a constant maximum output threshold, we make the threshold to be a learnable parameter. 
We name the new type of activation function \text{\relux}, which is defined as:
\begin{equation}
\text{\relux}(z_i; \theta_i) = \begin{cases}
0, & \text{if } z_i <=0 \\
z_i, &  \text{if } 0<z_i <\theta_i \\
\theta_i, & \text{if } z_i >= \theta_i 
\end{cases}
\end{equation}
where $\theta_i$ is a learnable upper bound of the ReLU output. 

Similar to~\eqref{eq:indicator_relu0}, the indicator matrices for the varying outputs of \relux are
\begin{equation}\label{eq:indicator_relux}
\IM^l(i, i) = \begin{cases} 1 & \text{if } \ub^{l}_i > 0 \text{ and } {\lb}^l_i < \theta_i\\
0 & \text{otherwise} 
\end{cases}
\end{equation}

Depending on the \relux activation status, the output of a \relux neural network is,
\begin{equation}\label{eq:relux}
F(x; W, \mathbf{\theta}) = W^L (\DM^{L-1}  W^{L-1} \cdots (\DM^1 W^1 x + D^1_{\theta}) \cdots + D^{L-1}_{\theta})\,,
\end{equation}
where $D_{\theta}^l$ denotes the ReLU output fixed at the maximum output value, 
\begin{equation}
D^l_{\theta}(i, i) = \begin{cases} \theta_i & \text{if } I^l_{\theta}(i, i)=1\\
0 & \text{if } I^l_{\theta}(i, i)=0 \end{cases} \,, \quad I^l_{\theta}(i, i) = \begin{cases} 1 & \text{if } \lb^l_i \geq \theta_i \\
0 & \text{otherwise} 
\end{cases} 
\end{equation}

The local Lipchitz bound is still calculated as~\eqref{eq:local_lip}. However, the bound can be potentially learned tighter because we encourage  ReLU outputs to be constant in both directions, and there could be less varying outputs in Eq \eqref{eq:indicator_relux} than in Eq \eqref{eq:indicator_relu0}.

\paragraph{Extension to non-ReLU activation functions}
Our local Lipschitz bound can be applied on non-ReLU activation functions. The key is to create constant output regions for the activation function and delete the corresponding rows or columns in the weight matrices. Since the MaxMin activation function \cite{anil2019sorting} has been shown to outperform ReLU on certified robustness \cite{leino2021globally, trockman2021orthogonalizing}, we take MaxMin as an example to explain how to apply our local Lipschitz bound. Let $x_1$ and $x_2$ be two groups of the input, the output of MaxMin is $\max(x_1, x_2), \min(x_1, x_2)$. To exploit local Lipschitz, we created a clipped version of MaxMin, similar to ReLU$\theta$. The output of the clipped MaxMin is $\min(\max(x_1, x_2), a), \max(\min(x_1, x_2), b)$, where $a$ is a learnable upper threshold for the max output in MaxMin and $b$ is a learnable lower threshold for the min output in MaxMin. Box propagation rule through MaxMin is straightforward to derive, so we can get the box bound on each entry after MaxMin. If the lower bounds of the Max entries are bigger than the upper threshold $a$, or the upper bounds of the Min entries are smaller than the lower threshold $b$, we can delete the corresponding columns in the successive matrix (similar to the procedure for ReLU networks).

\paragraph{Encouraging Fixed Neurons via a Sparsity Loss}
To encourage more rows and columns to be deleted in the weight matrices, We design a sparsity loss to regularize the neural network towards this goal. 
For a ReLU neural network, assuming that the $i$-th entry of the feature map at layer $l$ is bounded by $\lb^l_i \leq z^l_i \leq \ub^l_i$, we hope the neural network can learn to make as many $\ub_i$ to be smaller than zero and $\lb_i$ to be larger than $\theta_i$ without sacrificing too much of classification accuracy. For a MaxMin neural network, let $\lbmax$ be the lower bounds of the Max entries, and $\ubmin$ be the upper bounds of the Min entries. We hope the neural network can learn to make as many $\lbmax$ to be larger than the upper threshold $a$ and $\ubmin$ to be smaller than the lower threshold $b$. The sparsity losses for ReLU and MaxMin networks are as follows:
\begin{equation}\label{eqn:sparse_loss}
    \mathcal{L}_{\text{sparsity}}^{\text{ReLU}} = \max (0, \ub^l_i) + \max (0, \theta_i - \lb^l_i)\,,\quad \mathcal{L}_{\text{sparsity}}^{\text{MaxMin}} = \max (0, \ubmin - b) + \max (0, a - \lbmax)
\end{equation}
Finally,
our full training procedure is presented in Algorithm \ref{alg:tight}.

\begin{algorithm} 
\SetAlgoLined
\SetKwInOut{Input}{Input}
\Input{~Training data $(x,y) \sim \mathcal{D}$, perturbation size $\epsilon$, number of iterations for power method $n$, a neural network with $L$ layers.}
\Repeat{training ends}{
      Compute the box outer bound $[\lb^l, \ub^l]$ for layers $1$ to $L$ \;
      Compute indicator matrix $\IM$ using Eq \eqref{eq:indicator_relux} \;
      // \emph{Compute local Lipcthiz bound $L(x)$ for every input $x$ using Eq \eqref{eq:local_lip}} \\
      \For {layer from 1 to $L$}{
      // \emph{Power method} \\
      Initialize $u^l$ with the updated $u^l$ from the previous training episode \;
      \For {i < n}{
      If layer is conv: \\
      $v \leftarrow \IM^l \text{conv}(W^l,\IM^{l-1} u^l)/ \| \IM^l \text{conv}(W^l,\IM^{l-1} u^l) \| $ \\
      $u \leftarrow \IM^{l-1} \text{conv}^\intercal (W^l, \IM^l v) / \| \IM^{l-1} \text{conv}^\intercal (W^l, \IM^l v) \| $ \\
      If layer is linear: \\
      $v \leftarrow \IM^l W^l \IM^{l-1} u^l/ \| \IM^l W^l \IM^{l-1} u^l \| $ \\
      $u \leftarrow \IM^{l-1} {W^l}^\intercal \IM^l v / \| \IM^{l-1} {W^l}^\intercal \IM^l v \| $ \\
      }
      If layer is conv: 
      $\sigma^l(x) \leftarrow v^l \IM^l \text{conv}(W^l,\IM^{l-1} u^l)$ \\
      If layer is linear:
      $\sigma^l(x) \leftarrow v^l \IM^l W^l\IM^{l-1} u^l$ \\
      }
      Compute the worst logits using our local Lipschitz bound \;
      Update model parameters based on some loss functions (e.g., Cross-entropy loss).
    }
 \Return{Parameters of a robust neural network}
 \caption{Local Lipchitz based Certifiably Robust Training}
 \label{alg:tight}
\end{algorithm}

\paragraph{Computational cost} To obtain the local Lipschitz bound, we must perform power iterations to compute the spectral norm of reduced weight matrices for every input and its feature maps at each layer. Compared to other methods that use optimization tools (e.g., SDP, MILP) to bound local Lipschitz, our method is computationally efficient since only matrix vector multiplication is used. 

During training, compared to methods that only uses global Lipschitz bound, the local Lipcshitz bound varies based on the inputs.
For global Lipschitz bound, a common practice is to keep track of the iterate vector $u$ in power iteration, and use it to initialize the power iteration in the next training batch. With this initialization, only a few number of iterations is performed during training (typically the number of iterations is between $1$ to $10$ \cite{lee2020lipschitz,leino2021globally}). 
To extend the initialization strategy for $u$ to compute local Lipschitz, we need to keep track of the iterate vectors based on every input and its feature maps for every layer. Fortunately, this vector only provides an initialization for power iteration so it does not need to be stored accurately, and can be stored using low precision tensors. Further extensions could use dimension reduction or compression methods to store these vectors, or learn a network along the way to predict a good initializer for power iteration.
An alternative approach is to random initialize $u$ in every mini-batch, but we find that more power iterations need to be performed during training for this approach to have comparable performance as the using saved $u$.

During evaluation, to minimize the extra computational cost, we can avoid bound computation for two types of inputs: inputs that can already be certified using global Lipschitz bound or can be attacked by adversaries (e.g., a 100-step PGD attack). In practice, this typically rules out more than $80\%$ samples on CIFAR-10 or larger datasets, and greatly reduce computational cost required to compute local Lipschitz constants. In Section~\ref{sec:experiments} we will show more empirical results on this aspect. 
\section{Experiment}
\label{sec:experiments}
In this section, we first show our method achieves tighter Lipschitz bounds in neural networks. When combined with certifiable training algorithms such as BCP~\cite{lee2020lipschitz} and Gloro~\cite{leino2021globally}, as well as training algorithms using orthogonal convolution and MaxMin activation function~\citep{trockman2021orthogonalizing}, our method achieves both higher clean and certified accuracy. 
\paragraph{Experiment setup}
We train with our method to certify robustness within a $\ell_2$ ball of radius $1.58$ on MNIST \cite{lecun2010mnist} and $36/255$ on CIFAR-10 \cite{krizhevsky2009learning} and Tiny-Imagenet \footnote{\url{https://tiny-imagenet.herokuapp.com}} on various network architectures. 
We denote neural network architecture by the number of convolutional layers and the number of fully-connected layers. For instance, 6C2F indicates that there are 6 convolution layers and 2 fully-connected layers in the neural network. Networks have ReLU activation function unless mentioned otherwise. For more details and hyper-parameters in training, please refer to Appendix \ref{sec:appen_exp}. Our code is available at \url{https://github.com/yjhuangcd/local-lipschitz}.

\paragraph{Tighter Lipschitz bound}
We compared the training process of our method and BCP \cite{lee2020lipschitz} in Figure \ref{fig:tight_lips} (a-c). During training, our method uses local Lipschitz bound while BCP uses global Lipschitz bound for robust loss. We also tracked the global Lipschitz bound during our training and the average local Lipschitz bound (computed by our method) during BCP training for comparison. We can see from Figure \ref{fig:tight_lips} (a) that our local Lipschitz bound is always tighter the global Lipschitz bound. Furthermore, it is crucial to incorporate our bound in training to enable the network to learn to tighten the bound.  We can see that if we directly apply our method to a BCP trained network after training, the local Lipschitz bound has much less improvement over global Lipschitz bound. 
In addition, a tighter local Lipschitz bound by our method allows the neural networks to have larger global Lipschitz bound in the beginning of training. This potentially provides larger model capacity and eases the training in the early stage. As a consequence, we see a large improvement of both clean loss (Figure \ref{fig:tight_lips} (b)) and also robust loss (Figure \ref{fig:tight_lips} (c)) throughout the training.

\begin{figure}[h]
    \centering
    \includegraphics[width=\textwidth]{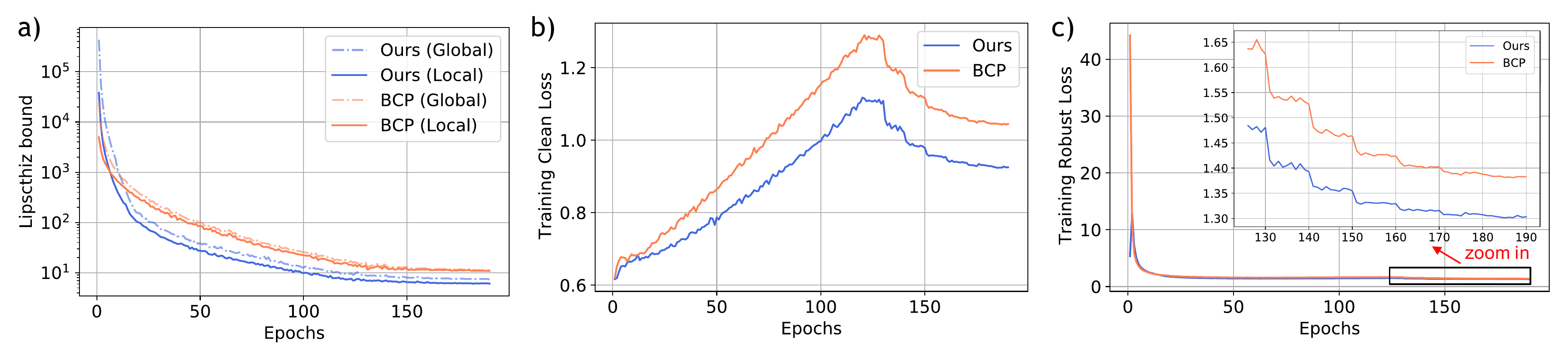}
    \caption{Certifiable training with our method and BCP on CIFAR-10. a) Global and average local Lipschitz bound during training. Cross entropy loss b) on natural 
    and c) on the worst logits. }
    \label{fig:tight_lips}
\end{figure}

\begin{figure}[htbp]
\begin{floatrow}
\ffigbox{
\includegraphics[width=0.4\textwidth]{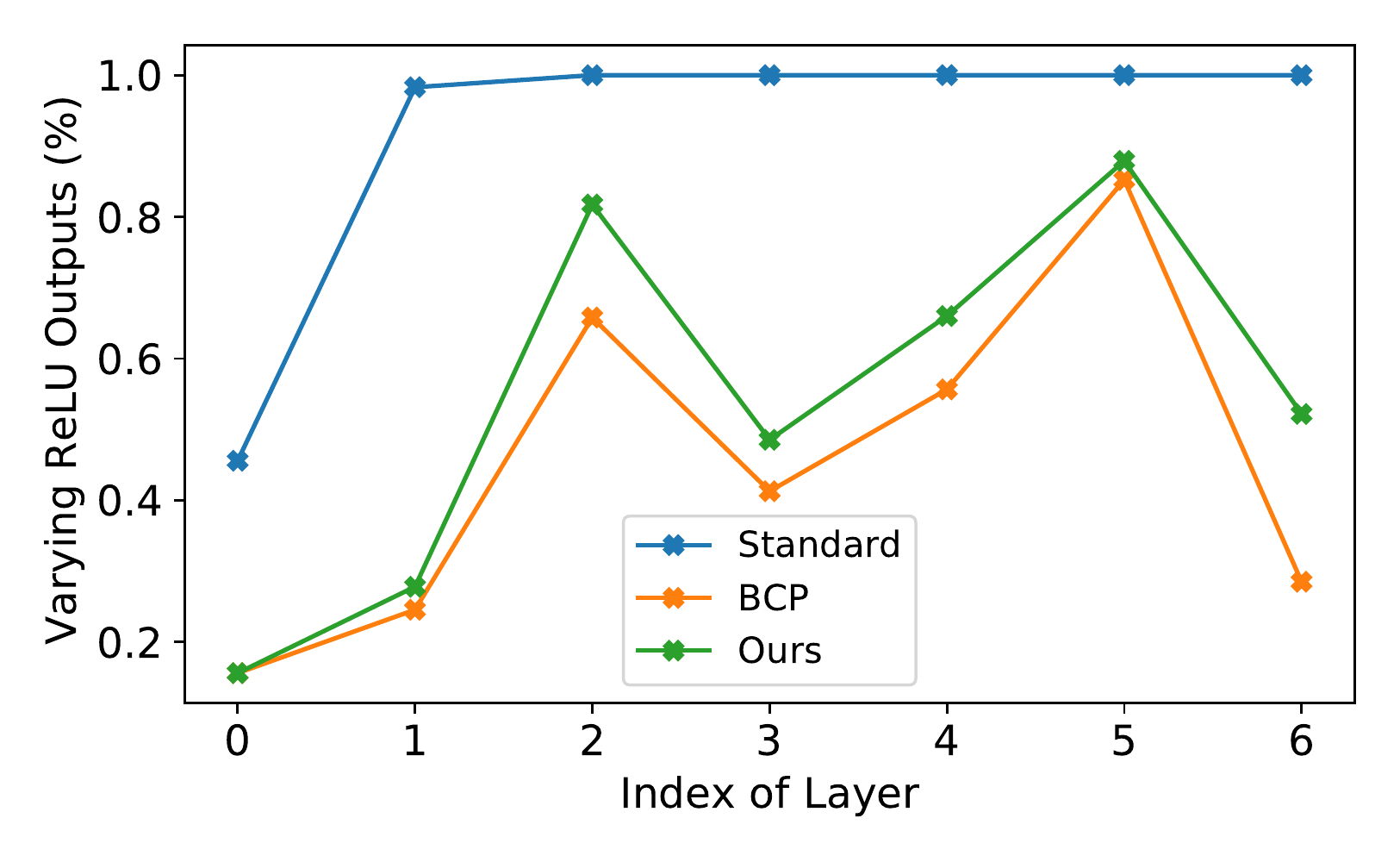}
}{\caption{Proportion of ReLU neurons that vary (ReLU outputs are not constants, see definition in Section~\ref{sec:relu0}) under perturbation.}\label{fig:activation}
}
\ffigbox{
\includegraphics[width=0.4\textwidth]{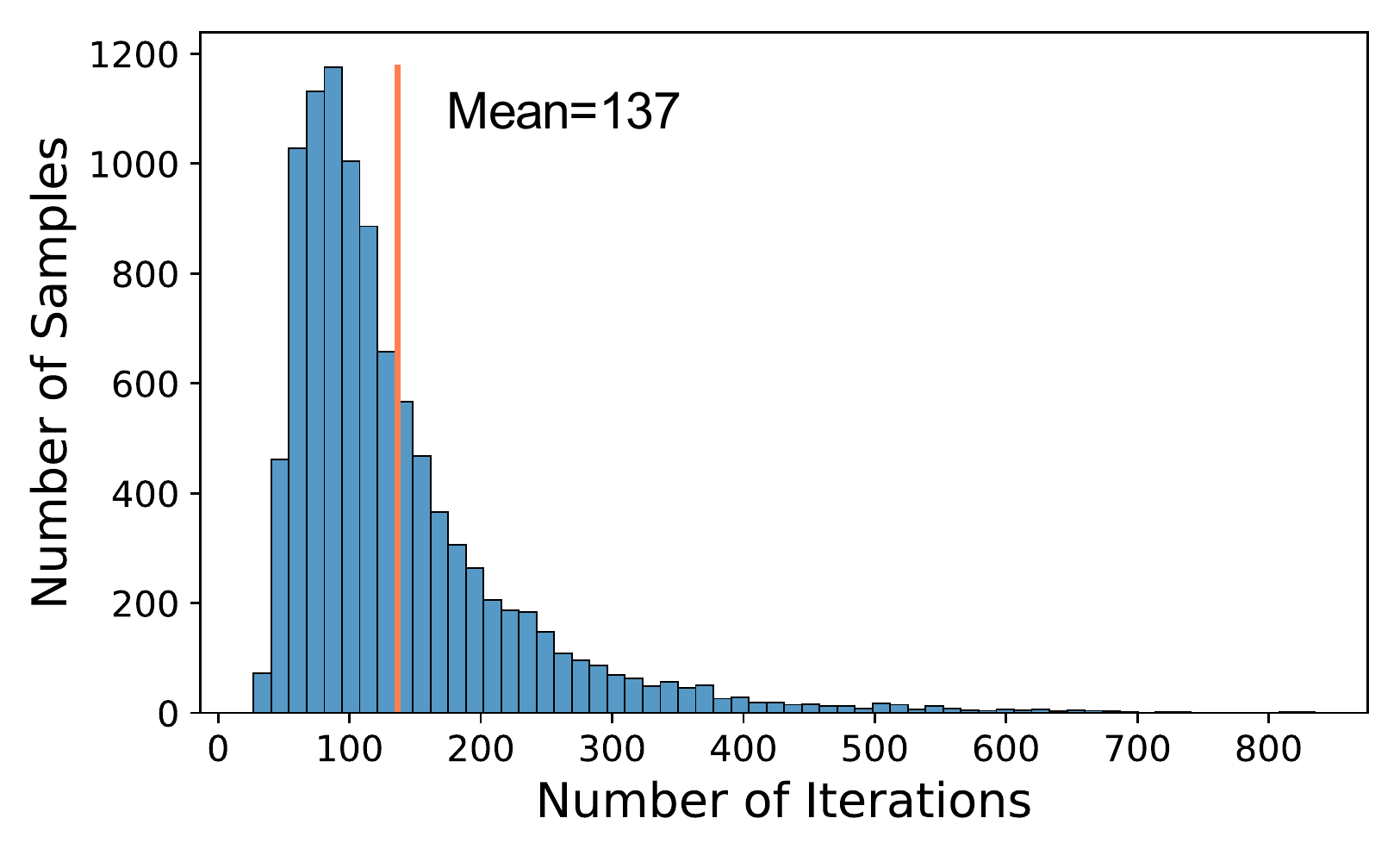}
}{\caption{Histogram for the number of power iterations to ensure convergence for the second last linear layer of the 6C2F CIFAR-10 network.}\label{fig:poweriter}
}
\end{floatrow}
\end{figure}

\paragraph{Sparsity of varying ReLU outputs} 
We examine our 6C2F model trained on CIFAR-10 and report the proportion of varying (non-constant) ReLU outputs at all layers except the last fully-connected layer (Figure \ref{fig:activation}). We compared the proportion with that of a standard CNN (trained with only cross entropy loss on unperturbed inputs) and a robust CNN trained by BCP. As we can see, the standard neural networks has the most varying ReLU neurons, indicating that dense varying ReLU outputs may provide larger model capacity for clean accuracy but reduce robustness. Our method has \emph{more} varying ReLU outputs than BCP, while achieving \emph{tighter} local Lipschitz bound than BCP. This indicates that our training method encourages the neural network to learn to delete rows and columns that contribute most to local Lipschitz constant during training, while keeping ReLUs for other rows or columns varying to obtain better natural accuracy. 

\paragraph{Certified robustness}
Our method can be used as a plug-in module in certifiable training algorithms that involves using $\ell_2$ Lipschitz constant such as BCP \cite{lee2020lipschitz} and Gloro \cite{leino2021globally}. We use \ours-G and \ours-B to denote our method trained with the Gloro loss and BCP loss respectively (detailed formulation for each loss can be found in Appendix~\ref{sec:appen_background}).
We compare the performance of our method against competitive baselines on both ReLU neural networks~\cite{lee2020lipschitz,leino2021globally,tsuzuku2018lipschitz,gowal2018effectiveness,wong2018scaling,xiao2018training} and MaxMin neural networks~\cite{leino2021globally,trockman2021orthogonalizing}. 
For each method, we report the clean accuracy (accuracy on non-perturbed inputs), the PGD accuracy (accuracy on adversarial inputs generated by PGD attack \cite{madry2017towards}), and the certified accuracy (the proportion of inputs that can be correctly classified and certified within $\epsilon$-ball). For PGD attack, we use 100 steps with step size of $\epsilon/4$. In our experiments, we use box propagation (as done in BCP) to obtain the lower bound and upper bound of every neuron. With our tighter Lipchitz bound, we further improve clean, PGD, and certified accuracy upon BCP and Gloro, and achieve the state-of-the-art performance on certified robustness (Table~\ref{tbl:robust}). On CIFAR-10 with ReLU activation function, we improved certified accuracy from $51.3\%$ (SOTA) to $54.3\%$. When MaxMin activation function is used, our local Lipschitz training approach also consistently improves clean, PGD and verified accuracy over baselines on both CIFAR-10 and TinyImageNet datasets.

To demonstrate the effectiveness of incorporating our bound in training, we also use our local bound to directly compute certified accuracy for \emph{pretrained} models using BCP. Our bound improves the certified accuracy of a pretrained BCP model from $51.3\%$ (reported in Table~\ref{tbl:robust}) to $51.8\%$. The improvement is less than training with our bound ($54.3\%$ in Table~\ref{tbl:robust}). Therefore, it is crucial to incorporate our bound in training to gain non-trivial robustness improvements.

\paragraph{Initialization strategy for power method}
We used two initialization strategies for singular vectors $u$ in power method during training. One option is to store $u$ for all the inputs and feature maps and initialize $u$ in the current training epoch with $u$ from the previous epoch, which requires storage. The other option is to random initialize $u$. Table~\ref{tab:power_initialization} shows the performance of these two approaches on CIFAR-10 with the 6C2F architecture. The number of power iterations during training is listed in the bracket. Although random initialization is memory-efficient, it needs more power iterations during training to achieve comparable performance compared to the approach of storing $u$. Too few iterations tend to cause inaccurate singular value and overfitting, resulting in lower certified accuracy.

\begin{table}[htb]
\centering
\begin{minipage}{0.6\linewidth}
\centering
\captionof{table}{Influence of initialization strategy used in power method. All numbers are accuracy of the 6C2F architecture on CIFAR-10.}
\small
\resizebox{1\textwidth}{!}
{
\begin{tabular}{l|ccc} 
\toprule
Method  & Clean (\%) & PGD (\%)& Certified (\%) \\
\midrule
Random init. (2 iters) & 76.7 & 69.0 & 0.5 \\
Random init. (5 iters) & 73.7 & 66.8 & 46.0 \\
Random init. (10 iters) & 72.0 & 65.8 & 51.6 \\
Using saved $u$ (2 iters) & 70.7 & 64.8 & \textbf{54.3} \\
\bottomrule
\end{tabular}
\label{tab:power_initialization}
}
\end{minipage}
\end{table}

\begin{table}[h!]
\small
\centering
\caption{Comparison to other certified training algorithms. Best numbers are highlighted in bold.}  \label{tbl:robust}
\begin{threeparttable}[ht]
\begin{tabular}{l|c|ccc}
\toprule
Method & Model & Clean (\%) & PGD (\%)& Certified(\%) \\
\midrule
\multicolumn{5}{c}{\textbf{MNIST} $(\epsilon=1.58)$} \\
\midrule
Standard & \mnistmodel & 99.0 & 45.4 & 0.0 \\
LMT \cite{tsuzuku2018lipschitz} & \mnistmodel & 86.5 & 53.6 & 40.5  \\
CAP \cite{kolter2017provable} & \mnistmodel & 88.1 & 67.9 & 44.5  \\
CROWN-IBP\textsuperscript{*} \cite{zhang2019towards} & \mnistmodel & 82.3 & 80.4 & 41.3  \\
GloRo \cite{leino2021globally} & \mnistmodel & 92.9 & 68.9 & 50.1 \\
\ours-G (ours) & \mnistmodel & \textbf{96.3} & \textbf{78.2} & \textbf{55.8} \\
BCP \cite{lee2020lipschitz} & \mnistmodel & 92.4 & 65.8 & 47.9  \\
\ours-B (ours) & \mnistmodel & 93.0 & 66.7 & 48.7 \\
\midrule
\multicolumn{5}{c}{\textbf{CIFAR-10} $(\epsilon=\nicefrac{36}{255})$} \\
\midrule
Standard & 4C3F & 85.3 & 41.2 & 0.0 \\
IBP \cite{gowal2018effectiveness} & 4C3F & 34.5 & 31.8 & 24.4  \\
LMT \cite{tsuzuku2018lipschitz} & 4C3F & 56.5 & 49.8 & 37.2  \\
CAP \cite{kolter2017provable} & 4C3F & 60.1 & 55.7 & 50.3  \\
CROWN-IBP\textsuperscript{*} \cite{zhang2019towards} & 4C3F & 54.2 & 52.7 & 41.9  \\
ReLU-Stability$^\dagger$ \cite{xiao2018training} & 4C3F & 57.4 & 52.4 & 51.1 \\
GloRo \cite{leino2021globally} & 4C3F & 73.2 & 66.3 & 49.0 \\
\ours-G (ours) & 4C3F & \textbf{75.7} & \textbf{68.6} & 49.7 \\
BCP \cite{lee2020lipschitz} & 4C3F & 64.4 & 59.4 & 50.0 \\
\ours-B (ours) & 4C3F & 70.1 & 64.2 & \textbf{53.5} \\
\midrule
Standard & \cifarmodel & 87.5 & 32.5 & 0.0 \\
IBP \cite{gowal2018effectiveness} & \cifarmodel & 33.0 & 31.1 & 23.4  \\
LMT \cite{tsuzuku2018lipschitz} & \cifarmodel & 63.1 & 58.3 & 38.1  \\
CAP \cite{kolter2017provable} & \cifarmodel & 60.1 & 56.2 & 50.9 \\
CROWN-IBP\textsuperscript{*} \cite{zhang2019towards} & \cifarmodel & 53.7 & 52.2 & 41.9  \\
GloRo \cite{leino2021globally} & \cifarmodel & 70.7 & 63.8 & 49.3  \\
\ours-G (ours) & \cifarmodel & \textbf{76.4} & \textbf{69.2} & 51.3 \\
BCP \cite{lee2020lipschitz} & \cifarmodel & 65.7 & 60.8 & 51.3 \\
\ours-B (ours) & \cifarmodel & 70.7 & 64.8 & \textbf{54.3} \\
\midrule
GloRo + MaxMin \cite{leino2021globally} & \cifarmodel & 77.0 & 69.2 & 58.4 \\
Caylay + MaxMin \cite{trockman2021orthogonalizing} & \cifarmodel & 75.3 & 67.7 & 59.2 \\
\ours-B \ + MaxMin (ours) & \cifarmodel & \textbf{77.4} & \textbf{70.4} & \textbf{60.7} \\
\midrule
\multicolumn{5}{c}{\textbf{Tiny-Imagenet} $(\epsilon=\nicefrac{36}{255})$} \\
\midrule
Standard & \tinyimagenetmodel & 35.9 & 19.4 & 0.0 \\
GloRo \cite{leino2021globally} & \tinyimagenetmodel & 31.3 & 28.2 & 13.2 \\
\ours-G (ours) & 8C2F & \textbf{37.4} & \textbf{34.2} & 13.2 \\
BCP \cite{lee2020lipschitz} & 8C2F & 28.7 & 26.6 & 20.0 \\
\ours-B (ours) & 8C2F & 30.8 & 28.4 & \textbf{20.7} \\
\midrule
Gloro + MaxMin \cite{leino2021globally} & 8C2F & 35.5 & 32.3 & 22.4 \\
\ours-B \ + MaxMin (ours) & 8C2F & \textbf{36.9} & \textbf{33.3} & \textbf{23.4} \\
\bottomrule
\end{tabular}
\begin{tablenotes}[flushleft]
\item [*] CROWN-IBP was originally designed for $\ell_\infty$ norm certified defense but its released code also supports $\ell_2$ training. We use the same hyperparameters as $\ell_\infty$ training setting.
\item [$^\dagger$] \citep{xiao2018training} was designed for $\ell_\infty$ norm with a MIP verifier. We extend it to the $\ell_2$ norm setting and verify its robustness using the SOTA alpha-beta-CROWN verifier (see Section~\ref{app:ablation}).
\end{tablenotes}
\end{threeparttable}
\vspace{-5pt}
\end{table}

\paragraph{Computational cost during evaluation time} 
Since local Lipschitz bound needs to be evaluated for every input and global Lipschitz bound does not depend on the input, our method involves additional computation cost during certification.
Let $u(t)$ be the singular vector computed by power iteration at iteration $t$, we stop power iteration when $|| u(t+1) - u(t) || \leq \expnumber{1}{-3}$. To analyze the computational cost during evaluation time, we plot the histogram of number of iterations for power method to converge for the second last layer in the 6C2F model in Figure \ref{fig:poweriter}. The average number of iterations for convergence is $137$. The histograms for other layers are in Appendix \ref{sec:appen_exp}.
To reduce the computational cost, we only need to compute local Lipschitz bound for samples that cannot be certified by global Lipschitz bound or cannot be attacked by adversaries. The proportion of those samples is $(100\% - \text{PGD Err} - \text{Global Certified Acc})$. For the 6C2F model on CIFAR-10, the proportion of samples that can be certified using global Lipschitz bound is $51.0\%$, and the error under PGD attack is $35.2\%$. 
Hence we only need to evaluate local Lipschitz bounds on the remaining $13.8\%$ samples, which greatly reduces the overhead of computing the local Lipschitz bounds.
\vspace{-10pt}
\FloatBarrier 
\section{Conclusion}
\vspace{-10pt}
In this work, we propose an efficient way to incorporate local Lipschitz bound for training certifiably robust neural networks. We classify the outputs of activation function as being constant or varying under input perturbations and consider them separately. Specifically, we remove the redundant rows and columns corresponding to the constant activation outputs in the weight matrix to get a tighter local Lipschitz bound. We propose a learnable bounded activation and the use of a sparsity encouraging loss during training to assist the neural network to learn to tighten our local Lipschitz bound. Our method consistently outperforms state-of-the-art Lipschitz based certified defend methods for $\ell_2$ norm robustness. We see no immediate negative societal impact in the proposed approach.

\subsubsection*{Acknowledgements}
Y. Huang is supported by DARPA LwLL grants. A. Anandkumar is supported in part by Bren endowed chair, DARPA LwLL grants, Microsoft, Google, Adobe faculty fellowships, and DE Logi grant. Huan Zhang is supported by funding from the Bosch Center for Artificial Intelligence.

\bibliographystyle{unsrtnat}
\bibliography{reference}

\newpage
\appendix
\section*{Appendix}
\section{Method Details} 
\label{sec:appen_method}
\subsection{A Toy example}
\label{sec:appen_toyexample}
In this section, we provide a toy example to walk through the tighter local Lipschitz bound calculation method in a three-layer neural network step by step.
Consider a 3-layer neural network with \relux activation: $$x \rightarrow W^1 \rightarrow \text{\relux} \rightarrow W^2 \rightarrow \text{\relux} \rightarrow W^3 \rightarrow y\,,$$
where input $x \in \mathbb{R}^3$ and output $y \in \mathbb{R}$.

Suppose at a certain training epoch $t$, weight matrices have the following values,
$$W^{1} = W^{2} = \begin{bmatrix} 
3 & 0 & 0 \\
0 & 2& 0 \\
0 & 0 & 1 \\
\end{bmatrix}, W^{3} = \begin{bmatrix} 1 & 1 & 1  \end{bmatrix}, \theta = 1.$$
Consider an input $x = [1, -1, 0]^\top$ and input perturbation $||x'-x|| \leq 0.1$. Directly using the global Lipschitz bound, we have the following result,
\begin{equation}\label{eq:global_lip_example}
    L_{glob} \leq ||W^3|| ||W^2|| ||W^1|| = 9\sqrt{3}.
\end{equation}

Consider our approach for local Lipschitz bound computation. Given input $x = [1, -1, 0]^\top$ and bounded perturbed input $||x'-x|| \leq 0.1$, the feature map after $W_1$ is $[[2.7,3.3],[-2.2,-1.8],[-0.1,0.1]]^\intercal$, where we use $[\lb,\ub]$ to denote the interval bound for every entry. After \relux, the feature map turns into $[1,0,[0,0.1]]^\intercal$, where the first entry is a constant because it is clipped by the upper threshold $\theta$, and the second entry is a constant because it is always less than zero. The third entry varies under perturbation. Using these interval bounds, we can compute the indicator matrices $I_{\text{C}}^1$, $I_{\theta}^1$ and $I_{\text{V}}^{1}$. After the second weight matrix $W_2$, the feature map is $[3,0,[0,0.1]]^\intercal$. We can compute the local indicator matrices at this layer accordingly. 
Specifically, the local indicator matrices are as follows,
$$ I_{\text{C}}^1 = \begin{bmatrix}
0 &0 &0 \\
0 &1 &0 \\
0 &0 &0 
\end{bmatrix}\,,  I_{\theta}^1 = \begin{bmatrix}
1 & 0 &0 \\
0 & 0 &0 \\
0 & 0 &0 
\end{bmatrix}, I_{\text{V}}^{1} = \begin{bmatrix}
0 & 0 &0 \\
0 & 0 &0 \\
0 & 0 &1 
\end{bmatrix}\,, $$
$$ I_{\text{C}}^2 = \begin{bmatrix}
0 &0 &0 \\
0 &1 &0 \\
0 &0 &0 
\end{bmatrix}\,,  I_{\theta}^2 = \begin{bmatrix}
1 & 0 &0 \\
0 & 0 &0 \\
0 & 0 &0 
\end{bmatrix}, I_{\text{V}}^{2} = \begin{bmatrix}
0 & 0 &0 \\
0 & 0 &0 \\
0 & 0 &1 
\end{bmatrix}\,, $$

And the local Lipschitz bound around input $x = [1, -1, 0]^\top$ follows,
\begin{align}\label{eq:local_lip_example}
    L_{\text{local}}(x) &\leq \|W^{3} \IM^{2}\| \|\IM^{2} W^{2} \IM^{1}\| \|\IM^1 W^1\| \nonumber\\
    & = \|\begin{bmatrix} 0 & 0 & 1  \end{bmatrix}\| \|\begin{bmatrix}
    0 & 0 &0 \\
    0 & 0 &0 \\
    0 & 0 &1 \end{bmatrix}\| \|\begin{bmatrix}
    0 & 0 &0 \\
    0 & 0 &0 \\
    0 & 0 &1 \end{bmatrix}\| = 1
\end{align}
which is significantly tighter than the global Lipschitz bound obtained in Eq~\eqref{eq:global_lip_example}.

\subsection{Why not consider linear ReLU outputs?}
Our approach for tighter local Lipschitz bound separate ReLU outputs to two classes: constant ReLU outputs and varying ReLU outputs under perturbation. In fact, there is another case of ReLU outputs, where the outputs of ReLU equal to the inputs. We refer to these outputs as linear ReLU outputs. In this section, we derive the local Lipschitz bound when considering linear ReLU outputs, and we will find this bound is not necessarily smaller than the global Lipshitz bound.

For simplicity, let us consider the standard ReLU activation. There are three different types of ReLU outputs (linear, fixed, varying).
We inherit the notations from the main text, and use $\IML$ to denote the indicator matrix for linear ReLU outputs. The indicator matrices for the three different types of ReLU outputs are:
\begin{equation}\label{eq:indicator_linear_relu0}
\IML^l(i, i) = \begin{cases} 1 & \text{if } \lb^{l}_i > 0\\
0 & \text{otherwise}
\end{cases}\,,\quad \IMF^l(i, i) = \begin{cases} 1 & \text{if } \ub^{l}_i \leq 0\\
0 & \text{otherwise}
\end{cases} 
\,,\quad \IM^l = I-\IML-\IMF \,,
\end{equation}
where $I$ is the identity matrix. 

In addition, we use $\DML$ and $\DM$ to denote the slope of ReLU outputs for the linear and varying neurons. The elements in $\DML$ and $\DM$ can either be 0 or 1 due to the piece-wise linear property of ReLU. We omit the constant ReLU outputs here because they are all zeros.
\begin{equation} \label{eqn:D_vary_appen}
\DML^l (i, i) = \begin{cases} 1 & \text{if } \IML^l (i, i)=1\\
0 & \text{if } \IML^l (i, i)=0
\end{cases} \,,\quad 
\DM^l (i, i) = \begin{cases} \mathbbm{1}{(\text{ReLU}(z^l_i) > 0)} & \text{if } \IM^l (i, i)=1\\
0 & \text{if } \IM^l (i, i)=0
\end{cases} \,,\
\end{equation}
where $\mathbbm{1}$ denotes an indicator function.

Let us consider a 3-layer neural network with ReLU activation:
$$x \rightarrow W^1 \rightarrow \text{ReLU} \rightarrow W^2 \rightarrow \text{ReLU} \rightarrow W^3 \rightarrow y .$$
Then, the neural network function (denoted as $F(x; W)$) can be written as:
\begin{align}\label{eq:full_expansion}
F(x; W) & = (W^3 (\DML^2 + \DM^2) W^2 (\DML^1 + \DM^1) W^1)x \nonumber \\
        & = (W^3 \DML^2 W^2 \DML^1 W^1 + W^3 \DML^2 W^2 \DM^1 W^1 + W^3 \DM^2 W^2 \DML^1 W^1 + W^3 \DM^2 W^2 \DM W^1)x 
\end{align}
Bounding the local Lipschitz constant from Eq~\eqref{eq:full_expansion}, we get
\begin{align}
    L'_{\text{local}} &= \|W^3 \DML^2 W^2 \DML^1 W^1 + W^3 \DML^2 W^2 \DM^1 W^1 + W^3 \DM^2 W^2 \DML^1 W^1 + W^3 \DM^2 W^2 \DM^1 W^1\| \nonumber \\
    &\leq \| W^3 \DML^2 W^2 \DML^1 W^1 \| + \| W^3 \DML^2 W^2 \DM^1 W^1 \| + \| W^3 \DM^2 W^2 \DML^1 W^1 \| + \|W^3 \DM^2 W^2 \DM^1 W^1\| \label{eqn:traigular} \\
    &\leq \| W^3 \DML^2 W^2 \DML^1 W^1 \| + \| W^3 \DML^2 W^2 \IM^1 \| \| \IM^1 W^1 \| + \| W^3 \IM^2 \| \| \IM^2 W^2 \DML^1 W^1 \| \nonumber \\ & \quad + \|W^3 \IM^2 \| \| \IM^2 W^2 \IM^1 \| \| \IM^1 W^1\| \label{eqn:cs} \,,
\end{align}
where Eq \eqref{eqn:traigular} uses triangular inequality, and Eq \eqref{eqn:cs} uses Cauchy–Schwarz inequality and the fact that the Lipschitz constant of ReLU is smaller than 1. 

The key observation from this approach is that we can ``merge'' the weight matrices together for linear neurons (the first term in Eq \eqref{eqn:cs}). Then we have $\| W^3 \DML^2 W^2 \DML^1 W^1 \| \leq \|W^3\| \|W^2\| \|W^1\|$. Furthermore, if the singular vectors for the weight matrices are not aligned, $\| W^3 \DML^2 W^2 \DML^1 W^1 \|$ will be much tighter than $\|W^3\| \|W^2\| \|W^1\|$. 

However, there are three other non-negative terms in Eq \eqref{eqn:cs} from the triangular inequality. Even though the first term could be smaller than the global Lipschitz bound, the summation can be larger than the global Lipschitz bound.  In comparison, our approach always guarantees tighter Lipschitz bound as proved in Theorem~\ref{theo:local_lip}.

\section{Review of other robust training methods} \label{sec:appen_background}
During training, we combine our method with state-of-the-art certifiable training algorithms that involves using L2 Lipchitz bound such as BCP \cite{lee2020lipschitz} and Gloro \cite{leino2021globally}. 
In this section, we first introduce the goal of certifiable robust training and then describe BCP and Gloro in more detail.

Consider a neural network that maps input $x$ to output $\logits =F(x)$, where $\logits \in \mathbb{R}^N$. The ground truth label is $y$. The goal of certifiable training is to minimize the the certified error $R$. However, computing the exact solution of $R$ is NP-complete \cite{katz2017reluplex}. So in practice, many certifiable training algorithms compute an outer bound of the perturbation sets in logit space $\logitsbound$, and find the worst logit $\yworst \in \logitsbound$ to obtain an upper bound of $R$,
\begin{align}
    R &= \mathbb{E}_{(x,y) \sim \mathcal{D}}[ \max_{\logits \in \logitsboundexact} \mathbbm{1}( \argmax \logits \neq y ) ] \nonumber\\
    &\leq \mathbb{E}_{(x,y) \sim \mathcal{D}}[ \max_{\logits \in \logitsbound} \mathbbm{1}( \argmax \logits \neq y ) ] \nonumber\\
    & = \mathbb{E}_{(x,y) \sim \mathcal{D}}[ \mathbbm{1}( \argmax \yworst \neq y ) ]
\end{align}
where $\mathbb{B}_2(x, \epsilon)$ denotes the $\ell_2$ perturbation set in the input space, $\mathbb{B}_2(x, \epsilon)=\{x': \|x' - x \|_2 \leq \epsilon \}$, and $\logitsboundexact \subset \logitsbound$. In the subsequent text, we use $\mathbb{B}(x)$ in place of $\mathbb{B}_2(x, \epsilon)$ for short.
The main difference between BCP and Gloro is how they compute $\logitsbound$ and find the worst logits. 

\paragraph{BCP} 
In BCP, the perturbation set in the input space is propagated through all layers except the last one to get the ball outer bound $\mathbb{B}_2^l$ and box outer bound $\mathbb{B}_{\infty}^l$ at layer $l$. 
Specifically, the $l$-th layer box outer bound with midpoint $m^l$ and radius $r^l$ is $\mathbb{B}_{\infty}^l (m^l, r^l) = \{z^l: |z_i^l - m_i^l | \leq r_i^l, \forall i \}$.
To compute the indicator matrix in Equation \eqref{eq:indicator_relu0} and \eqref{eq:indicator_relux} for our local Lipschitz bound, we need to bound each neuron by ${lb}_i^l \leq z_i^l \leq {ub}_i^l$, where
\begin{align}
    {ub}^l &= \min (m_{\infty}^l+r_{\infty}^l, m_{ball}^l + r_{ball}^l) \\
    {lb}^l &= \max (m_{\infty}^l-r_{\infty}^l, m_{ball}^l - r_{ball}^l)
\end{align}
In the case of linear layers, it follows
\begin{equation}
    m_{\infty}^l = W^l m^{l-1} + b^l, r_{\infty}^l = |W^l| r^{l-1};   m_{ball}^l = W^l z^{l-1} + b^l, {r_{ball}}_i^l = \| W_{i,:}^l \| \rho^{l-1}
\end{equation}
where $\rho^{l-1} = \epsilon \prod_{k=1}^{l-1} \| W^k \|, m^{l-1} = ({ub}^{l-1} + {lb}^{l-1})/2$, and $r^{l-1} = ({ub}^{l-1} - {lb}^{l-1})/2$.

Then an outer bound of the perturbation sets in logit space $\logitsbound$ is computed via the ball and box constraints on the second last layer: $\logitsbound = W^L (\mathbb{B}_\infty^{L-1} \cap \mathbb{B}_2^{L-1})$. 
Finally, the worst logit is computed as 
\begin{align} \label{eqn:bcpworst}
    \yworst_i = F_y(x) - \min_{\logits \in \logitsbound}(\logits_y - \logits_i )
\end{align}
where $F_y(x)$ denotes the $y$ th entry of $F(x)$ that corresponds to the ground truth label. 
\paragraph{Gloro}
Unlike BCP, Gloro does not use the local information from box propagation to compute $\logitsbound$ for computation efficiency. In addition, Gloro creates a new class in the logits indicating non-certifiable prediction. The worst logit is computed as by appending the new entry after original logits output $\Tilde{\logits}(x) = [F(x) | \max_{m \neq y} \yworst_m ]$, 
where $\yworst_m$ is computed by (\ref{eqn:bcpworst}) with only ball constraints $\logitsbound = W^L (\mathbb{B}_2^{L-1})$. However, when we combine our method with Gloro, we use their way to construct the new class in the worst logit, but keeps the box constraint when computing $\logitsbound$. 

The loss in certifiable training algorithms is a mixed loss function on a normal logit $\logits=F(x)$ and the worst logit $\yworst(x)$: 
\begin{align} \label{eqn:bcploss}
    \mathcal{L} = \mathbb{E}_{(x,y) \sim \mathcal{D}} [ (1-\lambda) \mathcal{L}(\logits(x),y) + \lambda \mathcal{L}(\yworst(x), y) ]
\end{align}
where $\mathcal{L}$ denotes cross entropy loss, and $\lambda$ is a hyper-parameter.

\section{Experimental Details} \label{sec:appen_exp}
\subsection{Training details}

\paragraph{Computing Resources}
We train our MNIST and CIFAR models on 1 NVIDIA V100 GPU with 32 GB memory. We train our Tiny-Imagenet model on 4 NVIDIA V100 GPUs.

\paragraph{Architecture} We denote a convolutional layer with output channel $c$, kernel size $k$, stride $s$ and output padding $p$ as C($c,k,s,p$) and the fully-connected layer with output channel $c$ as F($c$). We apply \relux activation after every convolutional layer and fully-connected layer except the last fully-connected layer. 
\begin{itemize}[leftmargin=*]
    \item 4C3F: C(32,3,1,1)-C(32,4,2,1)-C(64,3,1,1)-C(64,4,2,1)-F(512)-F(512)-F(10)
    \item 6C2F: C(32,3,1,1)-C(32,3,1,1)-C(32,4,2,1)-C(64,3,1,1)-C(64,3,1,1)-C(64,4,2,1)-F(512)-F(10)
    \item 8C2F: C(64,3,1,1)-C(64,3,1,1)-C(64,4,2,0)-C(128,3,1,1)-C(128,3,1,1)-C(128,4,2,0)-C(256,3,1,1)- C(256,4,2,0)-F(256)-F(200)
\end{itemize}

\paragraph{Loss}
We train with the standard certifiable training loss from Eq \eqref{eqn:bcploss} and the sparsity loss from Eq \eqref{eqn:sparse_loss} used to encourage constant neurons. The total loss for ReLU networks is:
\begin{equation} \label{eqn:loss_total}
    \mathcal{L} = \mathbb{E}_{(x,y) \sim \mathcal{D}} [ (1-\lambda) \mathcal{L}(\logits(x),y) + \lambda \mathcal{L}(\yworst(x), y) ] + \lambda_{\text{sparsity}} (\max (0, \ub^l_i) + \lambda_{\theta} \max (0, \theta_i - \lb^l_i))
\end{equation}
where $\lambda$, $\lambda_{\text{sparsity}}$ and $\lambda_{\theta}$ are hyper-parameters. The total loss for MaxMin networks is defined similarly with $\mathcal{L}_{\text{sparsity}}^{\text{MaxMin}}$ from Eq \eqref{eqn:sparse_loss}.

\textbf{Hyper-parameters} 
The hyper-parameters that we use during training is listed in Table \ref{tbl:hyper}. Power iters. stands for the number of power iterations that we use during training. Intial threshold for \relux is the initialization value of the upper threshold in \relux. For all our experiments, we use the Adam optimizer \cite{kingma2014adam}. For CIFAR experiments, we use the same hyper-parameters for both ReLU and MaxMin activations.

\emph{Learning rate scheduling}
We train with the initial learning rate for $m$ epochs and then start exponential learning rate decay. Let $T$ be the total number of epochs. Learning rate (LR) for epoch $t$ is:
\begin{equation}
\text{LR}(t) = \begin{cases} \text{Initial\_LR} & \text{if } t \leq m \\
\text{Initial\_LR} (\frac{\text{End\_LR}}{\text{Initial\_LR}})^{\frac{t-m}{T-m}} & \text{if } t>m
\end{cases} 
\end{equation}

\emph{$\epsilon$ scheduling}
We gradually increase $\epsilon$ during training to a target value $\epsilon_{target}$ over $n$ epochs. The target value is set to be slightly larger than the $\epsilon$ that we aim to certify during evaluation time to give better performance \cite{leino2021globally}. $\epsilon$ for epoch $t$ is:
\begin{equation}
\epsilon(t) = \begin{cases}  \frac{t}{n} \epsilon_{target} & \text{if } t \leq n \\
\epsilon_{target} & \text{if } t>n
\end{cases} 
\end{equation}

\emph{mixture loss scheduling}
When combined with BCP, we train with the mixed loss of clean cross entropy loss and robust cross entropy loss. We increase $\lambda$ in Equation \ref{eqn:bcploss} from $0$ to $1$ linearly over $n$ epochs. $\lambda$ for epoch $t$ is:
\begin{equation}
\lambda(t) = \begin{cases}  \frac{t}{n} & \text{if } t \leq n \\
1 & \text{if } t>n
\end{cases} 
\end{equation}

\begin{table}
\caption{Hyper-parameters used in certifiable training.}\label{tbl:hyper}
\centering
\begin{small}
\begin{tabular}{ccccc}
 \toprule
 Dataset & MNIST & CIFAR & Tiny-Imagenet (ReLU) & Tiny-Imagenet (MaxMin)\\
 \midrule
 Initial LR & 0.001 & 0.001 & $\expnumber{2.5}{-4}$ & $\expnumber{1}{-4}$ \\
 End LR & $\expnumber{5}{-6}$ & $\expnumber{1}{-6}$ & $\expnumber{5}{-7}$ & $\expnumber{5}{-7}$ \\
 Batch Size & 256 & 256 & 128 & 128 \\
 $\epsilon_{\text{train}}$ &1.58 & 0.1551 & 0.16 & 0.16\\
 $\lambda_{\text{sparse}}$ & 0.0 & 0.0 & 0.01 & 0.01 \\
 $\lambda_{\theta}$ & 0.0 & 0.0 & 0.1 & 0.1 \\
 Warm-up Epochs & 0 & 20 & 0 & 0 \\
 Total Epochs & 300 & 800 & 250 & 250 \\
 LR Decay Epoch ($m$) & 150 & 400 & 150 & 150 \\
 $\epsilon$ Sched. Epochs ($n$) & 150& 400 & 125 & 125\\
 Power Iters. & 5 & 2 & 1 & 3 \\
  \bottomrule
  \end{tabular}
\end{small}
\end{table}

\subsection{Ablation Studies}
\label{app:ablation}
\paragraph{Reproducibility}
We train each model with 3 random seeds and report the average accuracy and the standard deviation in Table \ref{tbl:repeat}. For MNIST, we train with our local Lipscthiz bound and the Gloro loss. CIFAR and Tiny-Imagenet, we train with our local Lipscthiz bound and the BCP loss. In the main text (Table \ref{tbl:robust}), we report the best accuracy out of 3 runs.

\begin{table}
\caption{Average accuracy and standard deviation over 3 runs. The performance of our method is consistent across different runs.}\label{tbl:repeat}
\centering
\begin{small}
\begin{tabular}{cccccc}
 \toprule
 Data & $\epsilon$ & Model & Clean (\%) & PGD (\%)& Certified (\%) \\
  \midrule
 MNIST & 1.58 & 4C3F & 96.29 $\pm$ 0.07 & 78.31 $\pm$ 0.08 & 55.79 $\pm$ 0.23 \\
 CIFAR-10 & 36/255 & 4C3F & 69.78 $\pm$ 0.30 & 63.95 $\pm$ 0.26 & 53.40 $\pm$ 0.08 \\
 CIFAR-10 & 36/255 & 6C2F & 70.64 $\pm$ 0.05 & 64.81 $\pm$ 0.05 & 53.96 $\pm$ 0.60 \\
 Tiny-Imagenet & 36/255 & 8C2F & 29.78 $\pm$ 0.08 & 27.7 $\pm$ 0.13 & 20.5 $\pm$ 0.13 \\
  \bottomrule
  \end{tabular}
\end{small}
\end{table}

\paragraph{Comparison to techniques designed to certify a fixed, post-training network}
We used the state-of-the-art NN verifier (alpha-beta-CROWN \cite{zhang2018efficient,xu2021fast,wang2021beta}) from the VNN challenge \cite{bak2021second} to certify a fixed, post-training network trained.
The trained network is trained by techniques proposed by \citet{xiao2018training}.
\citet{xiao2018training} proposed a $\ell_{\infty}$ norm certified defense by imposing ReLU stability regularizer with adversarial training, and verifying the network using a mixed integer linear programming (MILP) verifier. The original approach in \citet{xiao2018training} is not directly applicable to our setting, as they relied on the MILP verifier which cannot scale to the large models evaluated in our paper, and they focused on $\ell_{\infty}$ norm robustness. To make a fair comparison to their approach, we made a few extensions to their paper:

\begin{itemize}
    \item We use $\ell_2$ norm adversarial training to replace their $\ell_{\infty}$ norm adversarial training.
    \item We use the same large CIFAR network (4C3F with 62464 neurons) as in our other experiments.
    \item We use the best NN verifier in the very recent VNN COMP 2021 \cite{bak2021second}, alpha-beta-CROWN \cite{zhang2018efficient,wang2021beta}, to replace their MILP based verifier.
\end{itemize}

Additionally, we tried different regularization parameters and reported the best results here. The clean accuray is $57.39\%$, PGD accuracy is $52.41\%$ and the verified Accuracy is $51.09\%$.
This approach produces a reasonably robust model, thanks to the very recent strong NN verifier. However, its clean, verified and PGD accuracy are worse than ours. Additionally this approach is much less scalable than ours - the verification takes about 150 GPU hours to finish, while our approach takes only 7 minutes to verify the entire dataset. 

\paragraph{Comparison to adversarial training}
We compared our method with adversarial training (AT) \cite{madry2017towards} and TRADES \cite{zhang2019theoretically} methods as these are known to regularize the Lipschitz constant of neural networks and provide robustness. Although AT and TRADES regularize the neural network Lipschitz compared to a naturally trained neural network, it is still not enough to provide certified robustness. We train with AT and TRADES on CIFAR-10 with the 6C2F architecture and report their accuracies, global and local Lipschitz bounds in Table \ref{tbl:AT}. The certified accuracy is calculated using local Lipschitz bound. As we can see from the table, although the models trained via AT and TRADES have much smaller Lipschitz bound than naturally trained models (we only use cross entropy loss on clean images in natural training), the Lipschitz bound is still too large to give certified robustness. We also used alpha-beta-CROWN to certify the adversarially trained CIFAR-10 4C3F network. The verified accuracy is still $0\%$.

\begin{table}[htbp]
\caption{Comparison to adversarial training methods on CIFAR-10 with the 6C2F architecture.}\label{tbl:AT}
\centering
\begin{small}
\begin{tabular}{cccccc}
 \toprule
 Methods & Global Lipschitz bound & Local Lipschitz bound & Clean (\%) & PGD (\%)& Certified (\%) \\
  \midrule
 Standard & $1.52 \times 10^9$ & $4.81 \times 10^8$  & 87.7 & 35.9 & 0.0 \\
 AT & $2.27 \times 10^4$ & $1.85 \times 10^4$ & 80.7 & 70.76 & 0.0 \\
 TRADES & $1.82 \times 10^4$ & $1.32 \times 10^4$ & 80.0 & 72.3 & 0.0 \\
 BCP & 11.35 & 11.08 & 65.7 & 60.8 & 51.3 \\
 Ours & 7.89 & 6.68 & 70.7 & 64.8 & 54.3 \\
  \bottomrule
  \end{tabular}
\end{small}
\end{table}

\paragraph{Influence of sparsity loss on certified robustness}
To encourage the sparsity of varying ReLU neurons, we design a hinge loss to regularize the neural network (Eq \ref{eqn:loss_total}). To study the effectiveness of this sparsity loss, we vary the coefficient of this loss when training a 8C2F model on Tiny-Imagenet. In addition, we find that down-weighting the hinge loss on the upper threshold improves performance. Hence we keep $\lambda_{\theta}$ as 0.1 for all the models. We report the clean, PGD and certified accuracy in Table \ref{tbl:sparse}. As we can see, too large coefficients on the sparsity loss tends to over-regularize the neural network. When $\lambda_{\text{sparse}}=0.01$, we obtain the best performance. 

\begin{table}[htbp]
\caption{Influence of sparsity loss on certified robustness on the TinyImageNet dataset with the 8C2F model.}\label{tbl:sparse}
\centering
\begin{small}
\begin{tabular}{cccc}
 \toprule
 $\lambda_{\text{sparse}}$ & Clean (\%) & PGD (\%)& Certified (\%) \\
  \midrule
 0.0 & 30.6 & 28.3 & 19.9 \\
 0.01 & 30.8 & 28.4 & 20.7 \\
 0.03 & 29.7 & 27.3 & 20.2 \\
 0.1  & 28.9 & 26.7 & 19.8 \\
  \bottomrule
  \end{tabular}
\end{small}
\end{table}

\paragraph{Time cost in training}
Since our method needs to evaluate local Lipschitz bound on every data point during training, we pay additional computational cost than Gloro \cite{leino2021globally} and BCP \cite{lee2020lipschitz}. However, our local Lipschitz bound is still much more computational efficient than convex relaxation methods such as CAP \cite{wong2018scaling}. We report the computation time (s/epoch) in Table \ref{tbl:time}.
\begin{table}[htbp]
\caption{Comparison of training time per epoch.}\label{tbl:time}
\centering
\begin{small}
\begin{tabular}{cccccc}
 \toprule
 & &  \multicolumn{4}{c}{Computation time (s/epoch)}  \\
 & & \multicolumn{4}{c}{-----------------------------------------------}\\
 Data & Model & CAP & Gloro & BCP & Ours \\
  \midrule
 MNIST & 4C3F & 689 & 9.0 & 17.3 & 45.5 \\
 CIFAR-10 & 4C3F & 645 & \na & 23.5 & 38.2 \\
 CIFAR-10 & 6C2F & 1369 & 6.5 & 26.0 & 69.8 \\
 Tiny-Imagenet & 8C2F & \na & \na & 343.5 & 398.8 \\
  \bottomrule
  \end{tabular}
\end{small}
\end{table}

\vspace{-0.5cm}

\paragraph{Number of power iterations for convergence} 
To analyze the computational cost, we plot the histogram of number for power method to converge at each convolutional layer in Figure \ref{fig:poweriters_all}. We used the 6C2F model on CIFAR-10.
Let $u(t)$ be the singular vector computed by power iteration at iteration $t$, we stop power iteration when $|| u(t+1) - u(t) || \leq \expnumber{1}{-3}$. 

\begin{figure}
    \centering
    \includegraphics[width=0.9\textwidth]{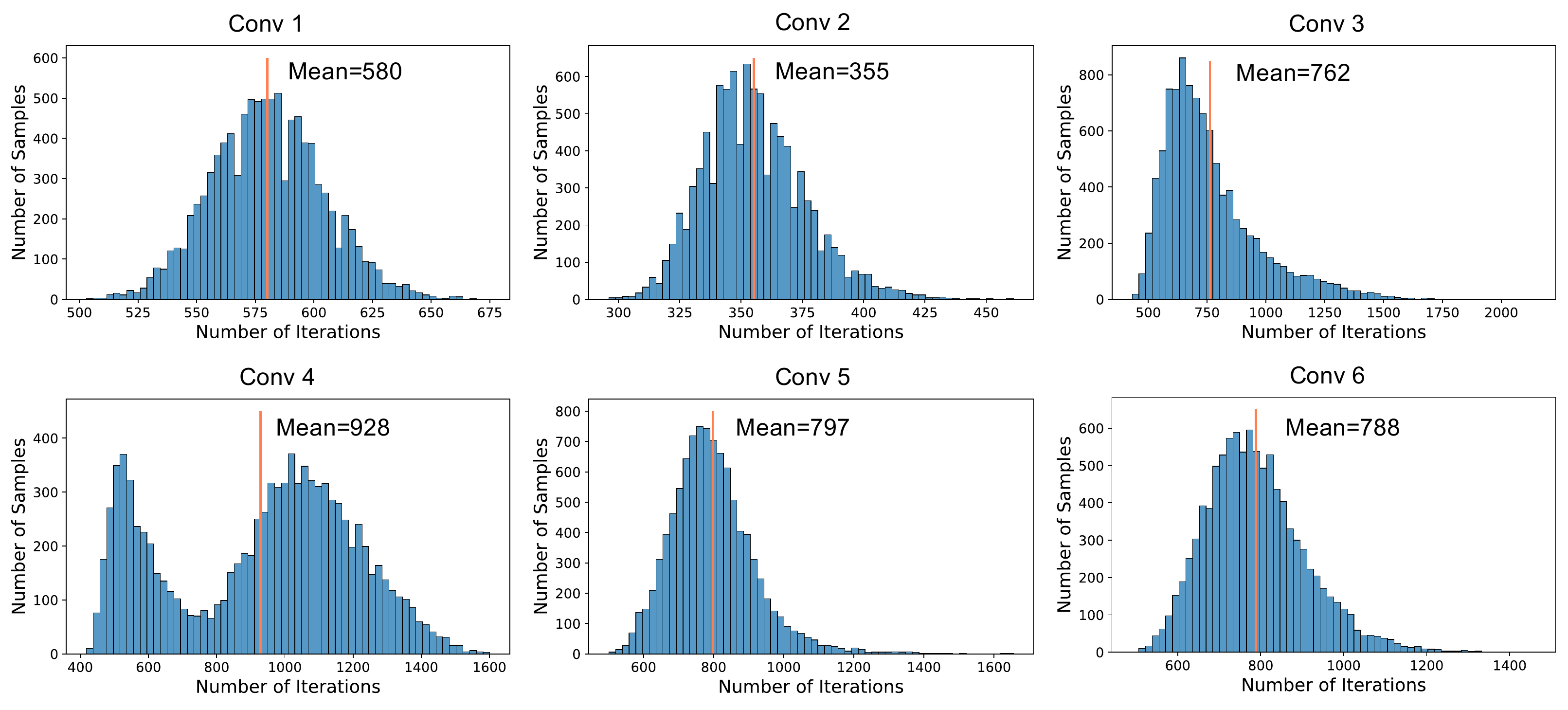}
    \caption{Number for power method to converge at each convolutional layer in the 6C2F model.}
    \label{fig:poweriters_all}
\end{figure}

\paragraph{Lipschitz bounds and Sparsity of varying ReLU outputs during training}
We track the global and Local Lipschitz bound during training for a standard trained CNN, a CNN trained with BCP and global Lipschitz bound, and a CNN trained with BCP and our local Lipschitz bound. All the models are trained with the 6C2F architecture on CIFAR-10. For BCP and our method, we train with $\epsilon_{\text{train}}=36/255$. We used the hyper-parameters found by BCP \cite{lee2020lipschitz} to train. The total epochs is 200, and first 10 epochs are used for warm-up. We decay learning rate by half every 10 epochs starting from the 121-th epoch.
The Lipschitz bound change during training is shown in Figure \ref{fig:lips_all}.
Meanwhile, we track the proportion of varying ReLU outputs for all the layers during training in Figure \ref{fig:act_all}. We can see that the models trained for certified robustness have much fewer varying ReLU outputs than the standard trained model. The sparsity of varying ReLU outputs is desired to tighten our local Lipschitz bound since we can remove more redundant rows and columns in weight matrices. 

\begin{figure}
    \centering
    \includegraphics[width=\textwidth]{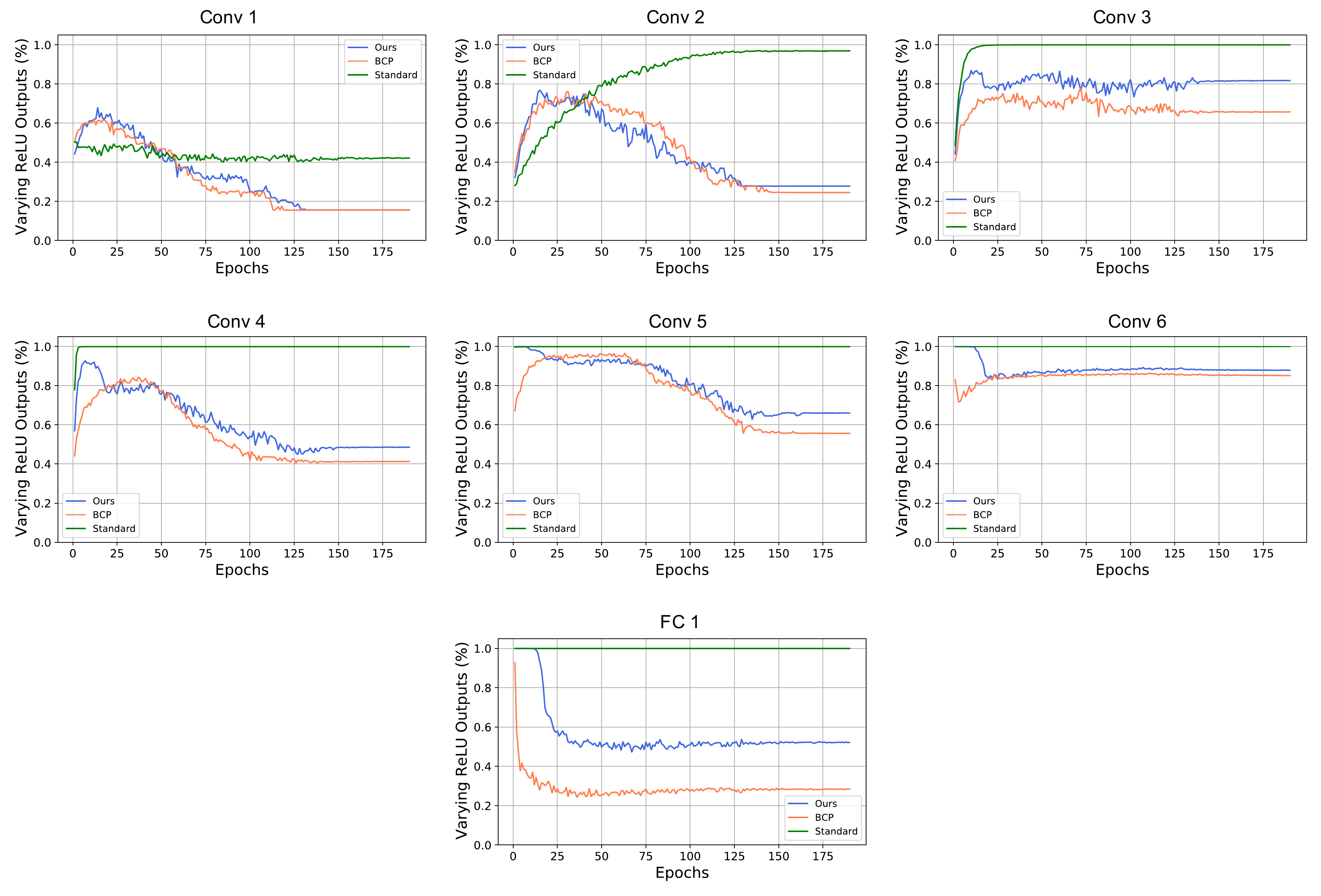}
    \caption{Proportion of varying ReLU outputs for all the layers in 6C2F model during training.}
    \label{fig:act_all}
\end{figure}

\vspace{-0.5cm}

\begin{figure}
    \centering
    \includegraphics[width=\textwidth]{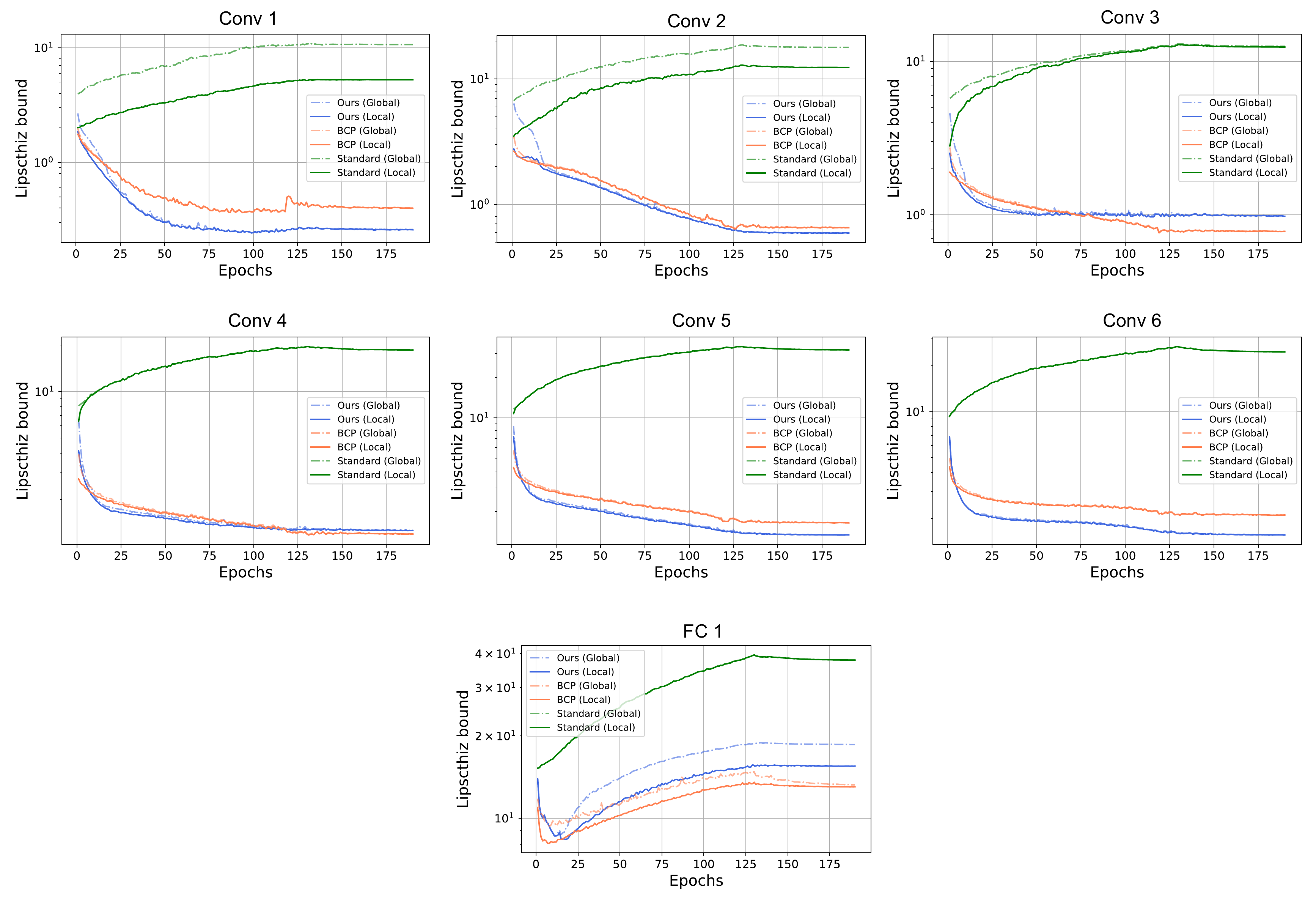}
    \caption{Lipschitz bound for all the layers in 6C2F model during training.}
    \label{fig:lips_all}
\end{figure}

\end{document}